\pgfplotsset{compat=1.15}
\newcommand{\I}{\mathbbm{1}}
\newcommand{\Id}{\mathrm{Id}}
\newcommand{\R}{\mathbb{R}}
\newcommand{\C}{\mathbb{C}}
\newcommand{\N}{\mathbb{N}}
\newcommand{\Ic}{\mathcal{I}}
\newcommand{\E}{\mathbb{E}}
\newcommand{\Xc}{\mathcal{X}}
\newcommand{\Yc}{\mathcal{Y}}
\newcommand{\Zc}{\mathcal{Z}}
\newcommand{\prob}{\mathrm{p}}
\newcommand{\Prob}{\mathrm{P}}
\newcommand{\expnumber}[2]{{#1}\mathrm{e}{#2}}
\DeclareMathOperator*{\Var}{Var}
\DeclareMathOperator*{\TV}{TV}
\DeclareMathOperator{\prox}{prox}
\newcommand{\indep}{\rotatebox[origin=c]{90}{$\models$}}
\theoremstyle{theorem}
    \numberwithin{theorem}{section}
	\crefname{theorem}{Theorem}{Theorems}
\theoremstyle{definition}
    \newaliascnt{definition}{theorem}
	\crefname{definition}{Definition}{Definitions}
\theoremstyle{corollary}
    \newaliascnt{corollary}{theorem}
    \newtheorem{corollary}[corollary]{Corollary}
	\crefname{corollary}{Corollary}{Corollarys}
\theoremstyle{proposition}
    \newaliascnt{proposition}{theorem}
    \newtheorem{proposition}[proposition]{Proposition}
	\crefname{proposition}{Proposition}{Propositions}
\theoremstyle{ex}
    \newaliascnt{ex}{theorem}
	\crefname{ex}{Example}{Examples}
\theoremstyle{lemma}
    \newaliascnt{lemma}{theorem}
    \newtheorem{lemma}[lemma]{Lemma}
	\crefname{lemma}{Lemma}{Lemmas}
\theoremstyle{defprop}
    \newaliascnt{defprop}{theorem}
	\crefname{defprop}{Definition and Proposition}{Definition and Propositions}
\theoremstyle{remark}
    \newaliascnt{remark}{theorem}
    \newtheorem{remark}[remark]{Remark}
	\crefname{remark}{Remark}{Remarks}
\theoremstyle{assumption}
    \newaliascnt{assumption}{theorem}
	\crefname{assumption}{Assumption}{Assumptions}
\crefname{figure}{Figure}{Figure}
\crefname{table}{Table}{Table}
\numberwithin{figure}{section}
\numberwithin{table}{section}
\numberwithin{equation}{section}
\numberwithin{theorem}{section}
\numberwithin{algorithm}{section}
\newcommand*{\addFileDependency}[1]{
\typeout{(#1)}
%
%
\@addtofilelist{#1}
%
\IfFileExists{#1}{}{\typeout{No file #1.}}
}\makeatother
\definecolor{spycolor}{RGB}{150,150,200}
\tikzset{
    rectspy/.default={lens={scale=3}, size=3cm},
    rectspy on/.style={#1,},
    rectspy/.style={
        draw=spycolor,
        connect spies,
        spy scope={
        every spy on node/.style={
            draw=spycolor,
            very thick,
            rectangle, 
            rectspy on,
        },
        every spy in node/.style={
            draw=spycolor,
            very thick,
            rectangle,
        },
        #1
        },
        spy connection path={\draw[spycolor, very thick] (tikzspyonnode) -- (tikzspyinnode);}
    }
}
\tikzset{
    sepbar/.style={
        very thick,
        black!30!white,
    }
}
\setlist[enumerate]{leftmargin=.5in}
\setlist[itemize]{leftmargin=.5in}
\title{Posterior-Variance-Based Error Quantification for Inverse Problems in Imaging}
\author{Dominik Narnhofer \thanks{Institute for Computer Graphic and Vision, Graz University of Technology. (\href{mailto:dominik.narnhofer@icg.tugraz.at}{dominik.narnhofer@icg.tugraz.at}, \href{mailto:pock@icg.tugraz.at}{pock@icg.tugraz.at}, \url{https://www.icg.tugraz.at/}).} \and Andreas Habring \thanks{Institute of Mathematics and Scientific Computing, University of Graz. (\href{mailto:andreas.habring@uni-graz.at}{andreas.habring@uni-graz.at}, \href{mailto:martin.holler@uni-graz.at}{martin.holler@uni-graz.at}, \url{https://mathematik.uni-graz.at/en/institute/research-areas/mathds/}).} \and Martin Holler\footnotemark[3] \and Thomas Pock\footnotemark[2]}
\newcommand\keywordsname{Key words}
\newcommand\keywordname{Key word}
\newcommand\MSCname{MSC codes}
\newenvironment{@abssec}[1]{%
     \if@twocolumn
       \section*{#1}%
     \else
       \vspace{.05in}\footnotesize
       \parindent .2in
         {\upshape\bfseries #1. }\ignorespaces 
     \fi}
     {\if@twocolumn\else\par\vspace{.1in}\fi}
\newenvironment{keywords}{\begin{@abssec}{\keywordsname}}{\end{@abssec}}
\newenvironment{MSCcodes}{\begin{@abssec}{\MSCname}}{\end{@abssec}}
\newcommand{\Df}{\mathrm{D}}
\newcommand{\RR}{\mathbb{R}}
\newcommand{\norm}[2][]{\|{#2}\|_{{#1}}}
\newcommand{\scp}[2]{\langle #1,#2 \rangle}
\begin{document}

\maketitle

\begin{abstract}
In this work, a method for obtaining pixel-wise error bounds in Bayesian regularization of inverse imaging problems is introduced. The proposed method employs estimates of the posterior variance together with techniques from conformal prediction in order to obtain coverage guarantees for the error bounds, %
without making any assumption on the underlying data distribution. It is generally applicable to Bayesian regularization approaches, independent, e.g., of the concrete choice of the prior. Furthermore, the coverage guarantees can also be obtained in case only approximate sampling from the posterior is possible. With this in particular, the proposed framework is able to incorporate any learned prior in a black-box manner. Guaranteed coverage without assumptions on the underlying distributions is only achievable since the magnitude of the error bounds is, in general, unknown in advance. Nevertheless, experiments with multiple regularization approaches presented in the paper confirm that in practice, the obtained error bounds are rather tight. For realizing the numerical experiments, also a novel primal-dual Langevin algorithm for sampling from non-smooth distributions is introduced in this work.

\end{abstract}

\begin{keywords}
Bayesian imaging, conformal prediction, uncertainty quantification, error control, learned priors, inverse problems
\end{keywords}

\begin{MSCcodes}
68U10 $\cdot$ 62F15 $\cdot$ 65C40 $\cdot$ 65C60 $\cdot$ 65J22
\end{MSCcodes}
\section{Introduction}
In this work, we provide a generally applicable method for solving inverse problems, and for obtaining statistical bounds on the error of the prediction. The bounds are proven to hold in probability via methods of conformal prediction \cite{romano2019conformalized}, without making a-priori assumptions on the underlying distributions and by only using a finite data set.

In inverse problems, the goal is to recover the true source signal $x \in \mathcal{X}$ from a corrupted observation $z \in \mathcal{Z}$ obtained from the so-called \emph{measurement} or \emph{forward} operator $\mathcal{F}:\mathcal{X}\rightarrow \mathcal{Z}$
\begin{equation}\label{eq:inverse_problem}
    z = \mathcal{F}(x)+\nu,
\end{equation}
where $\nu$ denotes the measurement noise. Since in most applications of interest the forward operator $\mathcal{F}$ is not invertible or at least severly ill-conditioned, \eqref{eq:inverse_problem} is not well-posed in general. In a probabilistic framework, $x$, $z$, and $\nu$ are replaced by the random variables $X$, $Z$, and $N$ and, consequently, \eqref{eq:inverse_problem} is modelled by a common distribution $\prob(x,z)$. In this setting, solving the inverse problem amounts to estimating the random variable $X$ based on an observation of $Z$, which is frequently done by invoking the posterior distribution computed using Bayes theorem
\begin{equation}\label{eq:bayes_thm}
\prob(x|z) = \frac{\prob(z|x)\prob(x)}{\prob(z)}
\end{equation}
respectively its negative logarithm
\[-\log(\prob(x|z))=-\log(\prob(z|x))-\log(\prob(x))+const.\]
While a model for the data likelihood $\prob(z|x)$ is easily found if the noise distribution is fixed, a lot of research has gone into modelling the prior distribution $\prob(x)$, see \Cref{sec:related_works_reg}. The present work, however, is directed at a different aim: Given any (approximate) model of the prior distribution, and a resulting method to obtain point estimates, we want to obtain error bounds with coverage guarantees for the point estimate that hold true even if the model of the prior distribution and/or the method to obtain point estimates are not 100\% accurate. 

To explain our approach, let us assume for simplicity $\Xc = \R$. Given a prediction $\hat{x}(Z)$ for $X$ and an i.i.d. sample $(X_i,Z_i)_{i=1}^m$, we want to obtain a method for predicting quantiles $\hat{S}_q = \hat{s}_q((X_i,Z_i)_{i=1}^m,Z)$ of the squared error $S=s(X,Z)=(X-\hat{x}(Z))^2$ satisfying
\begin{equation}\label{eq:intro_coverage}
\Prob[S\leq \hat{S}_q] \geq q
\end{equation}
where $q\in (0,1)$ can be defined arbitrarily by the user. Here, the probability is computed over $(X_i,Z_i)_{i=1}^m$ and $(X,Z)$, and \eqref{eq:intro_coverage} shall be satisfied without assumptions on the underlying distributions. Our approach to obtain the quantiles $\hat{S}_q$ builds on the relation between error and posterior variance. More precisely, given an estimator $\hat{t}(z)\approx \Var[X|Z=z]$, $\hat{T}=\hat{t}(Z)$ for the posterior variance, we first approximate the distribution of the $(S,\hat{T})$ on the sample $(X_i,Z_i)_{i=1}^m$, where we split the range of $\hat{T}$ into different bins $\tau_k = [t_k,t_{k+1})$ for $k=1,2,\ldots$ For each bin $\tau_k$, we then set $\hat{S}^{\tau_k}_q$ to be an approximation of the $q$-quantile of $\{S_i\;|\; \hat{t}(Z_i) \in \tau_k\}$. For this, we make use of conformal prediction techniques to ensure coverage (see \cite{romano2019conformalized} for the original work that motivated this part of our approach). In other words, $\hat{S}^{\tau_k}_q$ is an estimate of the $q$-quantile of $S|\hat{T}\in\tau_k$ based on a finite sample. For a new data point $Z$, the estimated $q$-quantile $\hat{S}_q$ such that \eqref{eq:intro_coverage} holds is then obtained as $\hat{S}_q = \hat{S}^{\tau_k}$, where $k$ is such that $\hat{t}(Z) \in \tau_k$. With this, our method acts as a regression from posterior variance to the squared error, where the regression is carried out pointwise on different variance bins (thus it is non-linear). Our coverage guarantees hold independently of the true distribution of $(X,Z)$ and even in the case that both $\hat{x}(Z)$ and $\hat{t}(Z)$ are sub-optimal estimators (in fact, they can even be arbitrary). The reason why this is possible is because we make \emph{no guarantees on the magnitude of the error bounds} (i.e., on the size of the interval of uncertainty). Naturally, the error bounds will be smaller, the better the prior $\prob(x)$ and the estimators $\hat{x}(Z)$ and $\hat{t}(Z)$. While our coverage guarantees hold in any case, it can only be observed after carrying out our method on a new datum, if the error bounds obtained for this datum are sufficiently small for practical use. Our experiments show that this is the case with different, reasonable priors.

For the estimation of statistical quantities, such as the posterior variance or the expected posterior, we rely on Langevin based Markov chain Monte Carlo sampling \cite{laumont2022bayesian}, where we introduce a novel variation of the Langevin algorithm applicable to non-smooth log-distributions. The contributions of our work can be summarized as follows.
\paragraph{Contributions}
\begin{itemize}
    \item We propose a novel method for error estimation for inverse problems in imaging. Based on a finite i.i.d. sample we estimate quantiles of the reconstruction error conditioned on an estimate of the posterior variance. 
    \item Coverage of the estimated quantiles is proven without making assumptions on the underlying distributions and by only invoking a finite data set by means of conformal prediction.
    \item Numerical experiments supporting our claims are conducted on multiple inverse problems as well as with different regularization methods. In particular we show that the error guarantees are satisfied in practice and provide additional empirical evidence demonstrating the strong relation between posterior variance and error.
      \item Within our numerical experiments, we propose a new approach for posterior sampling in the presence of a non-smooth convex functional based on the primal-dual algorithm.
    \item Furthermore, a novel method for evaluation of Langevin-like algorithms by means of a Markov Random Field is presented.
  
\end{itemize}

\section{Related Works}\label{sec:related_works}
\subsection{Regularization and Posterior Sampling}\label{sec:related_works_reg}
As the framework for solving inverse problems is well-established, most research nowadays revolves around the choice of the prior respectively regularizer $\prob(x)$ in \eqref{eq:bayes_thm}.
Two categories can be distinguished: In the past hand-crafted priors with certain assumptions on regularity such as sparsity \cite{rof92, sa01, chpo11, da04, ho20_ip_review} were commonly the preferred choice, while nowadays the field of inverse problems is mainly dominated by data driven approaches that either directly learn an inverse mapping \cite{ja08, le17, zb18} or attempt to learn a suitable prior.
The latter is frequently done by means of unrolling \cite{ham18, koef21, aga18} or as in recent works by the use of generative approaches \cite{bor17, na19, son19,habring2022generative}.
Priors, rather than penalty functionals, are of particular interest, since they offer the possibility of sampling due to their access to the posterior \cite{laumont2022bayesian}.
Although the sampling method itself is subject to ongoing research \cite{do21, pe16,ro99}, Langevin algorithms for Markov chain Monte Carlo sampling \cite{roberts1996exponential,durmus2019high,durmus2018efficient} are widely used either for inference \cite{laumont2022bayesian} or also in training \cite{zach2022computed, son19}.

\subsection{Uncertainty Quantification}
Uncertainty quantification is the task of quantifying the certainty or confidence of a prediction. Note, however, that there is a conceptual difference between uncertainty and prediction error since a very confident prediction can still be erroneous. While in the literature it has, e.g., been proposed to use the posterior variance as a measure of uncertainty \cite{zach2022computed}, a link to the reconstruction error is rarely found in existing works. It is precisely this link between uncertainty and error that we investigate in this work. Most works about uncertainty quantification in the literature either utilize Bayesian neural networks or posterior sampling using Markov chain Monte Carlo methods. In \cite{blundell2015weight} an algorithm for learning a distribution over the weights of a neural network is proposed. In \cite{kega17,narnhofer2021bayesian} the authors use Bayesian neural networks to investigate different types of uncertainty (aleatoric vs epistemic) and in \cite{rep19} the authors use hypothesis tests in a Bayesian framework to quantify the probability of the presence of certain structures in an image reconstruction. For a review of Bayesian uncertainty quantification of deep learning methods see \cite{abdar2021review}. In \cite{gal2016dropout} it is shown that dropout training in deep neural networks is equivalent to approximate Bayesian inference in deep Gaussian processes yielding tools to model uncertainty. In \cite{sc18} Markov chain dropout is used for deep learning in MRI reconstruction to investigate uncertainty and the authors also qualitatively relate the uncertainty to the reconstruction error. In \cite{ra20} the authors sample from the posterior using Monte Carlo sampling and modelling the gradient of the log prior distribution by a denoising auto encoder. In \cite{lu22} the authors propose to learn the reverse noise process in the context of MRI reconstruction allowing for sampling from the posterior. Closely related to our approach, in \cite{zach2022computed} the authors propose posterior sampling using Langevin dynamics combined with a generative prior to estimate the posterior variance.

\subsection{Conformal Prediction}
The proposed work also builds on ideas of conformal prediction and risk control, methods of obtaining distribution free confidence bounds for model predictions based on a finite training sample. Risk control describes methods, where an i.i.d. data set is used to bound the expected loss. The procedure builds on concentration inequalities, such as Hoeffding's inequality, which allow to bound an expected value based on a finite sample mean \cite{angelopoulos2021learn,angelopoulos2022image,medarametla2021distribution,angelopoulos2022conformal,bates2021distribution}. On the other hand, in conformal prediction, introduced in \cite{vovk2005algorithmic,shafer2008tutorial}, experience on a data set is used to estimate quantiles and/or confidence regions for a random variable, typically under the assumption of exchangeability of the sample \cite{tibshirani2019conformal,lei2018distribution,lei2015conformal,lei2013distribution,sadinle2019least,romano2019conformalized}. The present work can, indeed, be put into this framework: If the proposed method outputs a point prediction $\hat{x}$ and an upper bound for the squared error $\hat{s}_q$, then $[\hat{x}-\sqrt{\hat{s}_q}, \hat{x}+\sqrt{\hat{s}_q}]$ is a $q$ confidence interval for the ground truth. In terms of applications, our work is related most closely to \cite{angelopoulos2022image} where the authors propose a method predicting per-pixel confidence intervals for image regression. Contrary to our work, however, the authors guarantee a bounded expected rate of wrongly predicted pixel intervals in an image using risk control based on concentration inequalities. In the present work we bound the error probability for each pixel individually. From a theoretic viewpoint, our work is in the same spirit as \cite{romano2019conformalized}. There authors use conformal prediction to calibrate confidence intervals obtained by quantile regression. Distinct from both of the mentioned works we do not assume a given method for interval predictions, but only require access to a point predictor. Using data to estimate the distribution of this method's error, we implicitly obtain a set predictor. Moreover, we account for heteroscedasticity by conditioning the conformal prediction procedure on the given observation.

\section{Theoretical Results}
\label{sec:main}
In this section we first explain the proposed method in an abstract setting before elaborating on the application in imaging.

\subsection{Preliminaries}
We denote random variables as upper case letters and deterministic quantities or realizations of random variables as lower case letters. Random variables obtained as functions of other random variables are consequently denoted as the upper case letter of the function, e.g., we denote the posterior variance as $T=t(Z)$ with the function $t(z)=\Var[X\;|\;Z=z]$ evaluated at the random variable $Z$. Moreover, approximations/estimators are denoted with a hat, e.g., the estimator of the posterior variance $t$ is denoted as $\hat{t}$ and analogously for the random variables $T,\hat{T}$. Independence of two random variables $X,Y$ is denoted as $X\indep Y$. For a real valued random variable $Y$ with cumulative distribution function $F(y) = \Prob[Y\leq y]$, we denote for $q\in (0,1)$ the $q$-quantile as
\[y_q = \inf\{ y\in\R\; | \; F(y)\geq q\}.\]
Further for i.i.d. random variables $Y_1,\dots,Y_n$, we define the empirical quantile $\hat{Y}_q$ as the quantile of the empirical distribution function $\hat{F}(y) = \frac{1}{n}\sum_{i=1}^n \I_{Y_i\leq y}$,
\[\hat{Y}_q = \inf\{ y\in\R\; | \; \hat{F}(y)\geq q\}\]
which can be computed explicitly as $\hat{Y}_q = Y_{(\lceil qn \rceil)}$ with $Y_{(k)}$ being the k-th smallest value in $Y_1,\dots,Y_n$. We present a slight modification of a key result from conformal prediction, which can be found in different versions in \cite{vovk2005algorithmic,lei2018distribution,tibshirani2019conformal,romano2019conformalized}. In particular, the presented proofs are a generalization of \cite[Lemmas 1,2, Appendix]{romano2019conformalized} where we additionally allow for a random sample size.
\begin{lemma}\label{lem:cqr1}
Let $Y_1, Y_2,\dots,Y_{N+1}$ real valued random variables with $N\in\N$ a random sample size. Assume that $Y_1, Y_2,\dots,Y_{N+1}$ are i.i.d. with respect to the probability measure $\Prob[\;.\;|\; N=n]$ for every $n\in\N$, $n\geq 1$. Then for any $q\in (0,1)$ and $n\geq 1$,
\[\Prob[Y_{N+1}\leq \hat{Y}_q\;|\; N=n]\geq q\]
where $\hat{Y}_q = Y_{(\lceil(N+1)q\rceil)}$ is the empirical q-quantile of $Y_1, Y_2,\dots,Y_{N+1}$.
\end{lemma}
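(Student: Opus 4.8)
The plan is to reduce everything to the classical fixed-sample-size exchangeability argument by conditioning on the event $\{N=n\}$, and then to control the rank of the ``new'' variable $Y_{N+1}$ among the full sample. First I would fix $n\ge 1$ and work throughout under the conditional measure $\Prob[\,\cdot\mid N=n]$. By hypothesis, under this measure $Y_1,\dots,Y_{n+1}$ are i.i.d., and in particular exchangeable; hence any statement established for a deterministic sample of size $n+1$ transfers verbatim to the conditional law, and the randomness of $N$ plays no further role. Writing $k=\lceil (n+1)q\rceil$, note that $q\in(0,1)$ and $n\ge 1$ give $k\in\{1,\dots,n+1\}$, so $\hat Y_q=Y_{(k)}$ is well defined, and moreover $k\ge (n+1)q$, i.e. $k/(n+1)\ge q$.

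The heart of the argument is then to show $\Prob[\,Y_{n+1}\le Y_{(k)}\mid N=n\,]\ge k/(n+1)$. The intuition is that, by exchangeability, $Y_{n+1}$ is equally likely to occupy any rank among $Y_1,\dots,Y_{n+1}$, so the probability that its rank is at most $k$ equals $k/(n+1)$, and having rank at most $k$ forces $Y_{n+1}\le Y_{(k)}$. Combining such a rank bound with $k/(n+1)\ge q$ would immediately give the desired conclusion.

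The one genuinely technical point is to make the notion of rank precise without assuming that the $Y_i$ have a continuous distribution, since ties would otherwise spoil the exact uniformity of the rank. To handle this I would break ties in a symmetry-preserving way: draw $U_1,\dots,U_{n+1}$ i.i.d.\ uniform on $[0,1]$, independent of $(Y_i)_i$ and of $N$, and order the pairs $(Y_i,U_i)$ lexicographically. Because these pairs are again i.i.d.\ (hence exchangeable) and almost surely pairwise distinct, the lexicographic rank $\tilde R$ of $(Y_{n+1},U_{n+1})$ is uniform on $\{1,\dots,n+1\}$, whence $\Prob[\tilde R\le k\mid N=n]=k/(n+1)$. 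Since the lexicographic order refines the order of the $Y_i$, one checks that the event $\{\tilde R\le k\}$ is contained in $\{Y_{n+1}\le Y_{(k)}\}$; the inclusion together with the uniform-rank identity and $k/(n+1)\ge q$ then yields the claim.

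I expect the main obstacle to be exactly this tie-handling: the clean ``uniform rank'' picture is literally valid only in the tie-free case, and one must verify that ties can only help, in the sense that they move probability mass toward smaller ranks; the independent lexicographic tie-breaking is what makes this rigorous in full generality. The random sample size $N$, by contrast, is dealt with essentially for free by conditioning on $\{N=n\}$ at the outset, so the novelty relative to \cite{romano2019conformalized} lies only in carrying the argument through under the conditional measure for each $n$.
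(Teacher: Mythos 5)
Your proof is correct, but it takes a genuinely different route from the paper's. The paper avoids ranks and tie-breaking entirely: it starts from the deterministic inequality $q\leq\hat{F}(\hat{Y}_q)$ for the empirical CDF evaluated at its own $q$-quantile, takes conditional expectation given $\{N=n\}$, writes $\hat{F}(\hat{Y}_q)=\frac{1}{n+1}\sum_{i=1}^{n+1}\I_{Y_i\leq\hat{Y}_q}$, and uses exchangeability only to see that the $n+1$ probabilities $\Prob[Y_i\leq\hat{Y}_q\;|\;N=n]$ are all equal (since $\hat{Y}_q$ is a symmetric function of the whole sample), so that each of them, in particular the one with $i=n+1$, is at least $q$. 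This averaging argument is insensitive to ties by construction and needs no auxiliary randomness. Your route instead proves the stronger fact that the lexicographically tie-broken rank $\tilde{R}$ of $Y_{n+1}$ is exactly uniform on $\{1,\dots,n+1\}$ and then uses the one-sided inclusion $\{\tilde{R}\leq k\}\subseteq\{Y_{n+1}\leq Y_{(k)}\}$; that inclusion holds as stated, so your worry that ``ties can only help'' is already settled by it. What your argument buys is exactness: when the $Y_i$ are a.s.\ distinct the two events coincide, giving $\Prob[Y_{n+1}\leq Y_{(k)}\;|\;N=n]=k/(n+1)\leq q+\frac{1}{n+1}$, an anti-conservativeness bound that the paper's one-sided argument does not yield. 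What it costs is the enlargement of the probability space and the uniform-rank lemma. One hypothesis you should state precisely: the $U_i$ must be jointly independent of the pair $((Y_i)_i,N)$, not merely independent of $(Y_i)_i$ and of $N$ separately, since only joint independence guarantees that the pairs $(Y_i,U_i)$ remain i.i.d.\ under $\Prob[\;\cdot\;|\;N=n]$; this is automatic when you construct the $U_i$ on a product extension, but it is the one place where the random sample size is not entirely ``for free.''
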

\begin{remark}
    The sample size is set to $N+1$ instead of $N$ in order to enable a consistent notation for later results where the first $N$ random variables will be the estimation data and the $(N+1)$-th sample a new data point.
\end{remark}
\begin{proof}
    By definition of the empirical quantile it is always true that
    \[q\leq \hat{F}(\hat{Y}_q).\]
    As a consequence we find for fixed $n\geq 1$
    \begin{equation}
        \begin{aligned}
            q\leq\E[ \hat{F}(\hat{Y}_q)\;|\;N=n] 
            =& \E[ \frac{1}{N+1}\sum\limits_{i=1}^{N+1}\I_{Y_i\leq\hat{Y}_q}\;|\;N=n] 
            =\E[ \frac{1}{n+1}\sum\limits_{i=1}^{n+1}\I_{Y_i\leq\hat{Y}_q}\;|\;N=n] \\
            =&\frac{1}{n+1}\sum\limits_{i=1}^{n+1}\Prob[Y_i\leq\hat{Y}_q\;|\;N=n] 
            =\Prob[Y_{n+1}\leq\hat{Y}_q\;|\;N=n]\\
            =&\Prob[Y_{N+1}\leq\hat{Y}_q\;|\;N=n].
        \end{aligned}
    \end{equation}
\end{proof}
In the preceding Lemma we have proven the intuitive assertion that the empirical quantile of $Y_1,\dots Y_{N+1}$ has guaranteed coverage for any of the $Y_i$. In the subsequent result, on the other hand, we conclude, that by correcting the quantile by a factor $(1+\frac{1}{N})$ coverage is also guaranteed for a new unseen i.i.d random variable.
\begin{lemma}\label{lem:cqr}
Let $Y_1, Y_2,\dots,Y_{N},Y$ be real valued random variables with $N\in\N$ random such that $Y_1, Y_2,\dots,Y_{N},Y$ are i.i.d. with respect to the probability measure $\Prob[\;.\;|\; N=n]$ for every $n\in\N$, $n\geq 1$. Assume additionally that $(1+\frac{1}{n})q\leq 1$, then
\[\Prob[Y\leq \hat{Y}_{(1+\frac{1}{N})q}\;|\;N=n]\geq q\]
where now $\hat{Y}_{(1+\frac{1}{N})q} = Y_{(\lceil(N+1)q\rceil)}$ is the empirical quantile of $Y_1, Y_2,\dots,Y_{N}$.
We call $\hat{Y}_{(1+\frac{1}{N})q}$ the \emph{conformalized} q-quantile.
\end{lemma}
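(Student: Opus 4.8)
The plan is to reduce the statement about a fresh variable $Y$ to the previous \Cref{lem:cqr1} by treating $Y$ as the $(N+1)$-th element of an enlarged i.i.d.\ collection. Concretely, I would set $Y_{N+1} := Y$ so that $Y_1,\dots,Y_N,Y_{N+1}$ are i.i.d.\ with respect to $\Prob[\;.\;|\;N=n]$ for every $n$, exactly matching the hypothesis of \Cref{lem:cqr1} with sample size $N+1$. The previous lemma then immediately yields
\[\Prob[Y_{N+1}\leq \hat{Y}_q\;|\;N=n]\geq q,\]
where the empirical $q$-quantile there is $\hat{Y}_q = Y_{(\lceil (N+1)q\rceil)}$, taken over \emph{all} $N+1$ variables $Y_1,\dots,Y_{N+1}$.

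The remaining work is purely book-keeping about indices, and this is where I expect the only genuine subtlety to lie. In \Cref{lem:cqr1} the order statistic $Y_{(\lceil(N+1)q\rceil)}$ is computed from the full list including $Y_{N+1}=Y$, whereas in the present statement the conformalized quantile $\hat{Y}_{(1+\frac1N)q}$ is defined as the empirical quantile of only the first $N$ variables $Y_1,\dots,Y_N$. So I must verify that these two quantities coincide, i.e.\ that the $\lceil(N+1)q\rceil$-th smallest value among $N+1$ numbers equals the $\lceil (N+1)q\rceil$-th smallest value among the first $N$ of them. This holds precisely because the index $k := \lceil(N+1)q\rceil$ satisfies $k\leq N$ under the assumption $(1+\tfrac1n)q\leq 1$: indeed $(1+\tfrac1n)q\leq 1$ gives $(n+1)q\leq n$, hence $\lceil(n+1)q\rceil\leq n$ (since $n$ is an integer), so the desired order statistic never falls on the last slot and removing $Y_{N+1}$ from the list leaves the $k$-th order statistic unchanged. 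This is exactly why the notation $\hat{Y}_{(1+\frac1N)q}=Y_{(\lceil(N+1)q\rceil)}$ in the statement is consistent.

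I would also connect the index-$(1+\tfrac1N)q$ labelling to the empirical-quantile definition given in the Preliminaries: for a sample of size $N$, the empirical $p$-quantile is $Y_{(\lceil pN\rceil)}$, and substituting $p=(1+\tfrac1N)q$ gives $\lceil (1+\tfrac1N)q\,N\rceil=\lceil (N+1)q\rceil$, confirming that the conformalized quantile is genuinely the $(1+\tfrac1N)q$-quantile of the first $N$ variables. Assembling these pieces, I conclude
\[\Prob[Y\leq \hat{Y}_{(1+\frac1N)q}\;|\;N=n]=\Prob[Y_{N+1}\leq Y_{(\lceil(N+1)q\rceil)}\;|\;N=n]\geq q,\]
which is the claim. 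The main obstacle is thus not analytic but combinatorial: carefully justifying the ceiling-function inequality $\lceil(n+1)q\rceil\leq n$ from the hypothesis and thereby showing that the two order-statistic expressions agree; everything else follows directly from \Cref{lem:cqr1}.
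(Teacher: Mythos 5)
Your overall strategy is exactly the paper's: set $Y_{N+1}:=Y$, note that $(1+\tfrac1n)q\leq 1$ gives $\lceil(n+1)q\rceil\leq n$, and reduce to \Cref{lem:cqr1}. However, the step you yourself identify as the crux is justified by a false claim. You assert that since $k:=\lceil(N+1)q\rceil\leq N$, ``removing $Y_{N+1}$ from the list leaves the $k$-th order statistic unchanged,'' i.e.\ that the $k$-th smallest of $Y_1,\dots,Y_{N+1}$ equals the $k$-th smallest of $Y_1,\dots,Y_N$. This is not true: $Y_{N+1}$ is just another i.i.d.\ draw, not the maximum, so it can land anywhere in the sorted order. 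For instance, with $N=2$, $k=2$, $(Y_1,Y_2,Y_3)=(5,10,1)$, the second smallest of all three values is $5$, while the second smallest of the first two is $10$. The condition $k\leq N$ only ensures that the $k$-th order statistic of the enlarged sample is not its maximum; it does not prevent $Y_{N+1}$ from sitting below the $k$-th position, which is precisely the case in which deleting it shifts the order statistic. (You appear to have conflated the last \emph{slot in the sorted order} with the last \emph{index} $N+1$.)

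What is true --- and what the paper's proof uses --- is the weaker but sufficient statement that the two \emph{events} coincide: writing $Y_{(k,m)}$ for the $k$-th smallest value among $Y_1,\dots,Y_m$, one has, for $1\leq k\leq n$,
\[
\{Y_{n+1}\leq Y_{(k,n)}\}=\{Y_{n+1}\leq Y_{(k,n+1)}\}.
\]
One inclusion is immediate from $Y_{(k,n+1)}\leq Y_{(k,n)}$; conversely, if $Y_{n+1}\leq Y_{(k,n)}$, then $Y_{n+1}$ is $\leq$ at least $n-k+1$ of the values $Y_1,\dots,Y_n$, so its rank in the enlarged sample is at most $k$, whence $Y_{n+1}\leq Y_{(k,n+1)}$. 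In the problematic configurations (your counterexample included) the two order statistics differ, but both events hold, so the conditional probabilities agree and the reduction to \Cref{lem:cqr1} goes through. Replacing your value-equality claim by this event-equality repairs the argument; everything else you wrote matches the paper's proof.
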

\begin{proof}
    Let us denote $Y_{N+1}=Y$ and $Y_{(k,m)}$ the k-th largest value in $Y_1,\dots, Y_m$. Note that for $k$ such that $1\leq k\leq n$, $Y_{n+1}\leq Y_{(k,n)}$ if and only if $Y_{n+1}\leq Y_{(k,n+1)}$. Thus, since by assumption $\lceil(n+1)q\rceil\leq n$,
    \begin{equation}
        \begin{aligned}
            \Prob[Y_{N+1}\leq Y_{(\lceil(N+1)q\rceil,N)}\;|\;N=n] = \Prob[Y_{N+1}\leq Y_{(\lceil(N+1)q\rceil,N+1)}\;|\;N=n]
        \end{aligned}
    \end{equation}
    The result follows by applying \Cref{lem:cqr1} to the right-hand side of the equation.
\end{proof}

\subsection{The Proposed Method}
Let us assume for now that the signal space $\Xc=\R$. Let $(X_i,Z_i)_{i=1}^m\subset \Xc \times \Zc$ be an i.i.d. sample and $(X,Z)\sim (X_i,Z_i)$. Further, let $\hat{x}:\Zc\rightarrow \Xc$ be a prediction function for the ground truth $X$, e.g., $\hat{x}(z)\approx \E[X\;|\;Z=z]$, an approximation of the expected posterior. Define the squared error of the prediction $S = s(X,Z) = (\hat{x}(Z)-X)^2$ and accordingly $S_i = s(X_i,Z_i)$. A naive approach to obtain guaranteed error bounds would be to simply apply \Cref{lem:cqr} to the random variables $S_i, S$. Note, however, that this would yield an error bound which is independent of the given observation $Z$. In particular for heteroscedastic data this will result in inefficient error estimates. While in \cite{romano2019conformalized} the authors propose to account for heteroscedasticity by applying \Cref{lem:cqr} to an interval predictor which incorporates heteroscedasticity already, we propose a different approach, namely integrating information about $Z$ in the form of conditional probabilities. Denote an approximation of the posterior variance as $\hat{t}(z) \approx \Var[X|Z=z]$, $\hat{T} = \hat{t}(Z)$ and $\hat{T}_i = \hat{t}(Z_i)$. Ideally, we would apply \Cref{lem:cqr} to the random variable $S|\hat{T}=\tau$ for any fixed value of $\tau \in \R$ yielding guarantees on the error for any fixed value of the approximated variance. This way we would make use of all the information contained in $\hat{T}$. But since the estimation of a quantile of the random variable $S$ conditioned on any point-value of $\hat{T}$ is unfeasible in practice, where only a finite amount of data is available, a relaxation is needed. This is achieved by partitioning $[0,\infty)$ into disjoint intervals $\tau_k = [t_k,t_{k+1})$, $k=0,1,2,\dots$, and, for any given interval $\tau_k$, considering the error conditioned on $\hat{T}\in\tau_k$. To realize this, for any given interval $\tau_k$, first define the estimated conditional error quantile
\begin{equation}\label{eq:quantile_estimator}
    \hat{S}^{\tau_k}_q \coloneqq (1+\frac{1}{N_{\tau_k}})q-\text{empirical quantile of } \{S_i\;|\; \hat{T}_i \in \tau_k\}.
\end{equation}
where $N_{\tau_k}=|\{S_i\;|\; \hat{T}_i \in \tau_k\}|$. In the case that $(1+N_{\tau_k})q>N_{\tau_k}$, we set $\hat{S}^{\tau_k}_q$ to be the essential supremum of the random variable $S$. Given a new observation $Z$, the final method can then be described as follows:
\begin{enumerate}
    \item Compute $\hat{x}(Z)$ and $\hat{t}(Z)$.
    \item Pick $\tau_k$, such that $\hat{t}(Z)\in \tau_k$.
    \item Compute the error quantile estimator $\hat{S}_q$ as $\hat{S}^{\tau_k}_q$.
\end{enumerate}
\begin{remark}\label{rmk:choice_T_x_bar}
The following results do not rely on exact knowledge of the posterior expectation or variance, which is why we already introduced $\hat{X}$ and $\hat{T}$ as approximations above. As a matter of fact, \Cref{prop:coverage,cor:coverage} below are entirely independent of the particular choices of the functions $\hat{x}(z)$ and $\hat{t}(z)$. In particular, all results remain true despite using heuristics within the computations of the approximations of posterior expectation and variance, such as the choice of the thinning parameter explained in \Cref{rmk:thinning} below. While the theory is not affected by the choices of $\hat{x}$ and $\hat{t}$, the tightness, and thus quality, of the error estimates will crucially depend on the predictive capability of $\hat{T}$ with respect to the error $S$. Our choice is motivated by the high similarity between the squared error to the expected posterior $(X-\E[X\;|\;Z])^2$ and the posterior variance $\Var[X\;|\;Z]=\E[(X-\E[X\;|\;Z])^2]\;|\;Z]$, which is, in fact, the conditional expectation of the former.
\end{remark}
As a direct consequence of \Cref{lem:cqr}, we obtain the following coverage guarantee.
\begin{proposition}\label{prop:coverage}
Let $(X_i,Z_i)_{i=1}^m\subset \Xc\times \Zc$ and $(X,Z) \in \Xc\times \Zc$ be i.i.d. Let $S=s(X,Z) = (X-\hat{x}(Z))^2$ and $\hat{T}=\hat{t}(Z)$ and assume that $S$ is bounded from above almost surely. Assume further that $\Prob[\hat{T}\in\tau_k]>0$, then the estimated conditional error quantile satisfies
\[\Prob[S\leq \hat{S}^{\tau_k}_q\;|\; \hat{T}\in \tau_k]\geq q\]
\end{proposition}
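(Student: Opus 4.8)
The plan is to recognize that the statement is essentially a direct application of \Cref{lem:cqr} to the random variables $S_i$ restricted to the bin $\tau_k$, where the role of the random sample size $N$ is played by $N_{\tau_k}$, the (random) number of sample indices falling in the bin. The whole point of having proven \Cref{lem:cqr} with a \emph{random} sample size is precisely to handle this situation, since we do not know in advance how many of the $m$ calibration points will have their estimated variance $\hat{T}_i$ land in $\tau_k$.

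First I would set up the conditioning carefully. I would define $N = N_{\tau_k} = |\{i \in \{1,\dots,m\}\;|\; \hat{T}_i \in \tau_k\}|$ and consider the collection of squared errors $S_i$ for those indices $i$ with $\hat{T}_i \in \tau_k$, together with the new sample's error $S$ under the event $\hat{T} \in \tau_k$. The key observation is that, conditioning on $\{\hat{T}_i \in \tau_k\}$ (and on $\hat{T}\in\tau_k$ for the new point), the relevant $S_i$ and $S$ remain i.i.d.\ with respect to the conditional measure $\Prob[\,\cdot\,|\,N = n]$, for each fixed $n \geq 1$. This is because the $(X_i,Z_i)$ are i.i.d.\ to begin with, and restricting to the event that $\hat{t}(Z_i)\in\tau_k$ conditions each pair identically and independently; the count $N$ is then exactly the number of ``successes,'' and conditioning on $\{N=n\}$ leaves the $n$ surviving errors i.i.d.\ from the common conditional law of $S\,|\,\hat{T}\in\tau_k$. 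I would state this exchangeability/i.i.d.-under-conditioning claim explicitly, as it is the substantive step.

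With that in place, the estimator $\hat{S}^{\tau_k}_q$ defined in \eqref{eq:quantile_estimator} is, by construction, exactly the conformalized $q$-quantile $\hat{Y}_{(1+\frac{1}{N})q} = Y_{(\lceil (N+1)q\rceil)}$ from \Cref{lem:cqr}, with $Y_i = S_i$ (for $i$ in the bin) and $Y = S$. The assumption $\Prob[\hat{T}\in\tau_k] > 0$ guarantees the conditional measure is well defined and that $N \geq 1$ occurs with positive probability, and the boundedness of $S$ from above handles the degenerate case $(1+N_{\tau_k})q > N_{\tau_k}$, where $\hat{S}^{\tau_k}_q$ is set to the essential supremum and the coverage inequality holds trivially. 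Applying \Cref{lem:cqr} conditionally then yields $\Prob[S \leq \hat{S}^{\tau_k}_q \mid \hat{T}\in\tau_k,\, N=n] \geq q$ for every $n \geq 1$, and I would conclude by averaging over $n$ (the tower property / total probability over the values of $N$), which preserves the lower bound $q$ and gives $\Prob[S \leq \hat{S}^{\tau_k}_q \mid \hat{T}\in\tau_k] \geq q$.

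The main obstacle is not any hard inequality but the bookkeeping of the conditioning: one must verify cleanly that conditioning on the bin-membership event does not destroy the i.i.d.\ structure, and that $N_{\tau_k}$ plays the role of the random $N$ in \Cref{lem:cqr} in a way that is consistent with the indexing (the new point $S$ corresponds to $Y_{N+1}$). I would be careful to phrase the identification so that the new observation's error $S$ is genuinely exchangeable with the calibration errors \emph{conditionally on both $\hat{T}\in\tau_k$ and $N=n$}, rather than merely marginally; once this is pinned down, the rest is immediate from \Cref{lem:cqr}.
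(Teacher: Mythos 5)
Your proposal is correct and takes essentially the same route as the paper's own proof: condition on the bin count $N_{\tau_k}=n$, verify that the in-bin errors are conditionally i.i.d.\ with common law $S\mid\hat{T}\in\tau_k$ (the paper carries out exactly this computation in detail), apply \Cref{lem:cqr} with $N_{\tau_k}$ as the random sample size, handle the case $(1+N_{\tau_k})q>N_{\tau_k}$ via the essential supremum, and average over $n$ by total probability. The only difference is bookkeeping in the step you rightly flag as substantive: the paper identifies the new point with $Y_{N+1}$ by embedding it in an extended i.i.d.\ sequence $(S_i,\hat{T}_i)_{i>m}$ and taking the next in-bin index (finite a.s.\ by Borel--Cantelli), whereas you exploit the new point's independence from the calibration sample directly --- both are valid ways to establish the required conditional exchangeability.
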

\begin{proof}
Fix $\tau_k$ and let $(S_{i_j})_{j=1}^{N_{\tau_k}}$ be the set of all $S_i$ for which $\hat{T}_i\in \tau_k$. That is,
\begin{align}
    i_1 = \min\{1\leq i\leq m\;|\;\hat{T}_i\in\tau_k\},\quad i_{j+1} = \min\{i_j< i \leq m\;|\;\hat{T}_i\in\tau_k\}.
\end{align}
In a sequence $(S_i,\hat{T}_i)_{i>m}$, of i.i.d. copies of $(S,\hat{T})$ let $S_{i_{N_{\tau_k}+1}}$ be defined as the next sample with variance $\hat{T}\in\tau_k$ in this sequence, i.e.,
\[i_{N_{\tau_k}+1} = \min\{m< i\;|\;\hat{T}_i\in\tau_k\}.\]
Note that by the Borel-Cantelli Lemma, $i_{N_{\tau_k}+1}<\infty$ a.s., since $\Prob[\hat{T}\in\tau_k]>0$. For the distributions of these random variables we find for any $n\geq 1$, $j\leq n$ and $s\in\R$ by the law of total probability
\begin{equation}
\begin{aligned} \label{eq:intex_to_bin_change}
    \Prob[S_{i_j}\leq s\;|\; N_{\tau_k}=n] = \sum_{l=1}^m \Prob[S_{i_j}\leq s\;|\; i_j=l, N_{\tau_k}=n]\;\Prob[i_j=l\;|\; N_{\tau_k}=n]\\
    =\sum_{l=1}^m \Prob[S_l\leq s\;|\; \hat{T}_i\in \tau_k \text{ for $i=l$ and for $(n-1)$ different $i\in\{1,\dots,m\}\setminus\{l\}$} ]\\ \;\Prob[i_j=l\;|\; N_{\tau_k}=n]\\
    =\sum_{l=1}^m \Prob[S_l\leq s\;|\; \hat{T}_l\in \tau_k ]\;\Prob[i_j=l\;|\; N_{\tau_k}=n]\\
    =\sum_{l=1}^m \Prob[S\leq s\;|\; \hat{T}\in \tau_k ]\;\Prob[i_j=l\;|\; N_{\tau_k}=n]\\
    = \Prob[S\leq s\;|\; \hat{T}\in \tau_k ]
\end{aligned}
\end{equation}
and the same result can be obtained for $j=n+1$ via an analogous computation. Further we can deduce independence as follows. Let $s_1,\dots,s_{n+1}\in\R$,
\begin{equation}
\begin{aligned}
    \Prob[\forall j=1,2,\ldots,n+1: S_{i_j}\leq s_j\;|\; N_{\tau_k}=n] \\
    = \sum_{l_1<l_2<\dots<l_n\leq m<l_{n+1}}\Prob[\forall j:\;S_{i_j}\leq s_j\;|\; \forall j:\;i_j=l_j,\;N_{\tau_k}=n]\;\Prob[\forall j:\;i_j=l_j, \;|\;N_{\tau_k}=n]\\
    = \sum_{l_1<l_2<\dots<l_n\leq m<l_{n+1}} \Prob[\forall j:\;S_{l_j}\leq s_j\;|\; \hat{T}_i\in\tau_k \Leftrightarrow i \in \{l_1,\ldots,l_n\}]\;\Prob[\forall j:\;i_j=l_j\;|\;N_{\tau_k}=n]\\
    = \sum_{l_1<l_2<\dots<l_n\leq m<l_{n+1}} \Prob[\forall j:\;S_{l_j}\leq s_j\;|\; \forall j:\;\hat{T}_{l_j}\in\tau_k]\;\Prob[\forall j:\;i_j=l_j\;|\;N_{\tau_k}=n]\\
    = \sum_{l_1<l_2<\dots<l_n\leq m<l_{n+1}} \Prob[\forall j:\;S_{j}\leq s_j\;|\; \forall j:\;\hat{T}_{j}\in\tau_k]\;\Prob[\forall j:\;i_j=l_j\;|\;N_{\tau_k}=n]\\
    = \Prob[\forall j:\;S_{j}\leq s_j\;|\; \forall j:\;\hat{T}_{j}\in\tau_k]
    = \prod\limits_{j=1}^{n+1}\Prob[S_{j}\leq s_j\;|\; \hat{T}_{j}\in\tau_k]
    = \prod\limits_{j=1}^{n+1}\Prob[S_{i_j}\leq s_j\;|\; N_{\tau_k}=n]\\
\end{aligned}
\end{equation}
where the last step follows from \eqref{eq:intex_to_bin_change}.
Thus, we can apply \Cref{lem:cqr} to $(S_{i_j})_{j=1}^{N_{\tau_k}}$, $S_{i_{N_{\tau_k}+1}}$ and find for $(1+n)q<n$
\begin{equation}
\begin{aligned}
q\leq \Prob[S_{i_{N_{\tau_k}+1}}\leq \hat{S}^{\tau_k}_q\;|\; N_{\tau_k}=n]\\
= \sum\limits_{l>m} \Prob[S_{i_{N_{\tau_k}+1}}\leq \hat{S}^{\tau_k}_q\;|\; i_{N_{\tau_k}+1}=l,\;N_{\tau_k}=n]\;\Prob[i_{N_{\tau_k}+1}=l\;|\; N_{\tau_k}=n]\\
= \sum\limits_{l>m} \Prob[S_l\leq \hat{S}^{\tau_k}_q\;|\; i_{N_{\tau_k}+1}=l,\;N_{\tau_k}=n]\;\Prob[i_{N_{\tau_k}+1}=l\;|\; N_{\tau_k}=n]\\
= \sum\limits_{l>m} \Prob[S\leq \hat{S}^{\tau_k}_q\;|\; \hat{T}\in\tau_k,\;N_{\tau_k}=n]\;\Prob[i_{N_{\tau_k}+1}=l\;|\; N_{\tau_k}=n]\\
= \Prob[S\leq \hat{S}^{\tau_k}_q\;|\; \hat{T}\in\tau_k,\;N_{\tau_k}=n].
\end{aligned}
\end{equation}
In the case that $(1+n)q\geq n$ (in particular $n=0$), we find
\[q\leq 1=\Prob[S\leq \hat{S}^{\tau_k}_q\;|\;\hat{T}\in\tau_k,\;N_{\tau_k}=n]\]
as well by definition of $\hat{S}^{\tau_k}$ as the essential supremum of $S$. As a result, the law of total probability yields
\begin{equation}
\begin{aligned}
\Prob[S\leq \hat{S}^{\tau_k}_q\;|\;\hat{T}\in\tau_k] = \sum\limits_{n=0}^\infty \underbrace{\Prob[S\leq \hat{S}^{\tau_k}_q\;|\;\hat{T}\in\tau_k,\;N_{\tau_k}=n]}_{\geq q}\Prob[N_{\tau_k}=n\;|\;\hat{T}\in\tau_k]\geq q.
\end{aligned}
\end{equation}
\end{proof}
\begin{remark}
    Assuming $S$ to be bounded a.s. in \Cref{prop:coverage} is only necessary for the definition of $\hat{S}^{\tau_k}_q$ in the case that $(1+N_{\tau_k})q>N_{\tau_k}$ as the essential supremum of $S$. This condition is trivially satisfied for the application in imaging where a pixel's color or gray scale value is in $[0,1]$ or $[0,255]$.
\end{remark}
We conclude coverage without the conditioning on a specific $\tau_k$.
\begin{corollary}\label{cor:coverage}
With the conditions of \Cref{prop:coverage}, define $\hat{S}_q = \hat{S}^{\tau_k}_q$ where $\tau_k$ is such that $\hat{T}\in\tau_k$, then
\[\Prob[S\leq \hat{S}_q]\geq q\]
\end{corollary}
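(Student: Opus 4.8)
The plan is to derive this directly from \Cref{prop:coverage} via the law of total probability, using the fact that the intervals $\tau_k = [t_k, t_{k+1})$, $k=0,1,2,\dots$, form a partition of $[0,\infty)$, which is the full range of the nonnegative variance estimate $\hat{T}$. The single conceptual ingredient is that the definition of $\hat{S}_q$ is made exactly so that, on the event $\{\hat{T}\in\tau_k\}$, one has $\hat{S}_q = \hat{S}^{\tau_k}_q$ by construction. This lets me replace the data-dependent bound $\hat{S}_q$ by the fixed per-bin bound $\hat{S}^{\tau_k}_q$ inside each conditional probability, which is precisely where \Cref{prop:coverage} applies.

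First I would observe that since the $\tau_k$ partition $[0,\infty)$ and $\hat{T}\geq 0$ almost surely, the events $\{\hat{T}\in\tau_k\}$ are disjoint with $\sum_k \Prob[\hat{T}\in\tau_k]=1$. Then I would condition:
\begin{equation}
\Prob[S\leq \hat{S}_q] = \sum_{k} \Prob[S\leq \hat{S}_q\;|\;\hat{T}\in\tau_k]\,\Prob[\hat{T}\in\tau_k],
\end{equation}
where the sum ranges over those $k$ with $\Prob[\hat{T}\in\tau_k]>0$ (bins of probability zero contribute nothing and may be discarded, which conveniently sidesteps the positivity hypothesis of \Cref{prop:coverage} for those bins). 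On each such event $\hat{S}_q=\hat{S}^{\tau_k}_q$, so the conditional probability equals $\Prob[S\leq \hat{S}^{\tau_k}_q\;|\;\hat{T}\in\tau_k]$, which is $\geq q$ by \Cref{prop:coverage}.

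Substituting this lower bound and factoring out $q$ gives
\begin{equation}
\Prob[S\leq \hat{S}_q] \;\geq\; q\sum_{k:\,\Prob[\hat{T}\in\tau_k]>0}\Prob[\hat{T}\in\tau_k] \;=\; q,
\end{equation}
completing the argument. This is essentially a routine marginalization, so I do not anticipate a genuine obstacle; the only point requiring care is the identification $\hat{S}_q=\hat{S}^{\tau_k}_q$ on $\{\hat{T}\in\tau_k\}$, which is what makes the per-bin guarantee of \Cref{prop:coverage} transfer verbatim to each conditional term, together with the bookkeeping of restricting to positive-probability bins.
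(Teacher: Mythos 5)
Your proposal is correct and follows essentially the same route as the paper: the law of total probability over the variance bins, the identification $\hat{S}_q=\hat{S}^{\tau_k}_q$ on the event $\{\hat{T}\in\tau_k\}$, and the per-bin bound from \Cref{prop:coverage}. Your explicit restriction to bins with $\Prob[\hat{T}\in\tau_k]>0$ is a minor (and valid) refinement of the paper's argument, which sums over all $k$ without comment.
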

\begin{proof}
Applying \Cref{prop:coverage} and the law of total probability we can compute
\begin{align*}
    \Prob[S\leq \hat{S}_q] &= \sum_{k=0}^\infty \Prob[S\leq \hat{S}_q\;|\;\hat{T}\in\tau_k]\;\Prob[\hat{T}\in\tau_k]\\
    &= \sum_{k=0}^\infty \Prob[S\leq \hat{S}^{\tau_k}_q\;|\; \hat{T}\in\tau_k]\;\Prob[\hat{T}\in\tau_k]
    \geq \sum_{k=0}^\infty q\;\Prob[\hat{T}\in\tau_k] = q.
\end{align*}
\end{proof}

\subsection{Application to Bayesian Imaging}
In the previous section we introduced the proposed method for estimating the prediction error in a general setting. In this section we explain the specific application to inverse problems in imaging.

In order to apply our method to high dimensional image data, we model the samples $X_i$ as the gray scale values of individual image pixels and $Z_i$ as the given observation of the image. That is, our method is applied to each image pixel separately. The method consists of two parts. First for a given a sample $(x_i,z_i)_{i}$, we compute $(s_i,\hat{t}_i)$ and afterwards, for new data $z$, $\hat{t}(z)$ is computed and, for $\tau_k$ such that $\hat{t}(z)\in\tau_k$, the empirical quantile $\hat{s}^{\tau_k}_q$ is determined. The procedure is depicted in \Cref{algo:error_estimation}.

\begin{algorithm}
\setstretch{1.15}
\caption{Error estimation.}\label{algo:error_estimation}
\textbf{Input:} Observation $z$, random sample $(x_i,z_i)_{i=1}^m$, variance bins $(\tau_k)_k$, confidence level $q\in(0,1)$.\\
\textbf{Output:} Point prediction $\hat{x}$, error estimate $\hat{s}_q$.
\begin{algorithmic}[1]
\For{$i=1,2,\dots,m$}
\State $\hat{x}_{i} = \hat{x}(z_i)$
\State $\hat{t}_i = \hat{t}(z_i)$
\State $s_i = (\hat{x}_i-x_i)^2$
\EndFor
\State $\hat{x}=\hat{x}(z)$
\State $\hat{t}=\hat{t}(z)$
\State Pick bin $\tau_k$ with $\hat{t}\in \tau_k$.
\State $n_{\tau_k} = \left|\left\{i\;\middle|\; \hat{t}_i\in\tau_k\right\}\right|$
\State Compute $\hat{s}_q$ according to \eqref{eq:quantile_estimator}.
\end{algorithmic}
\end{algorithm}

\begin{figure}[htb]
\includegraphics[width= \linewidth]{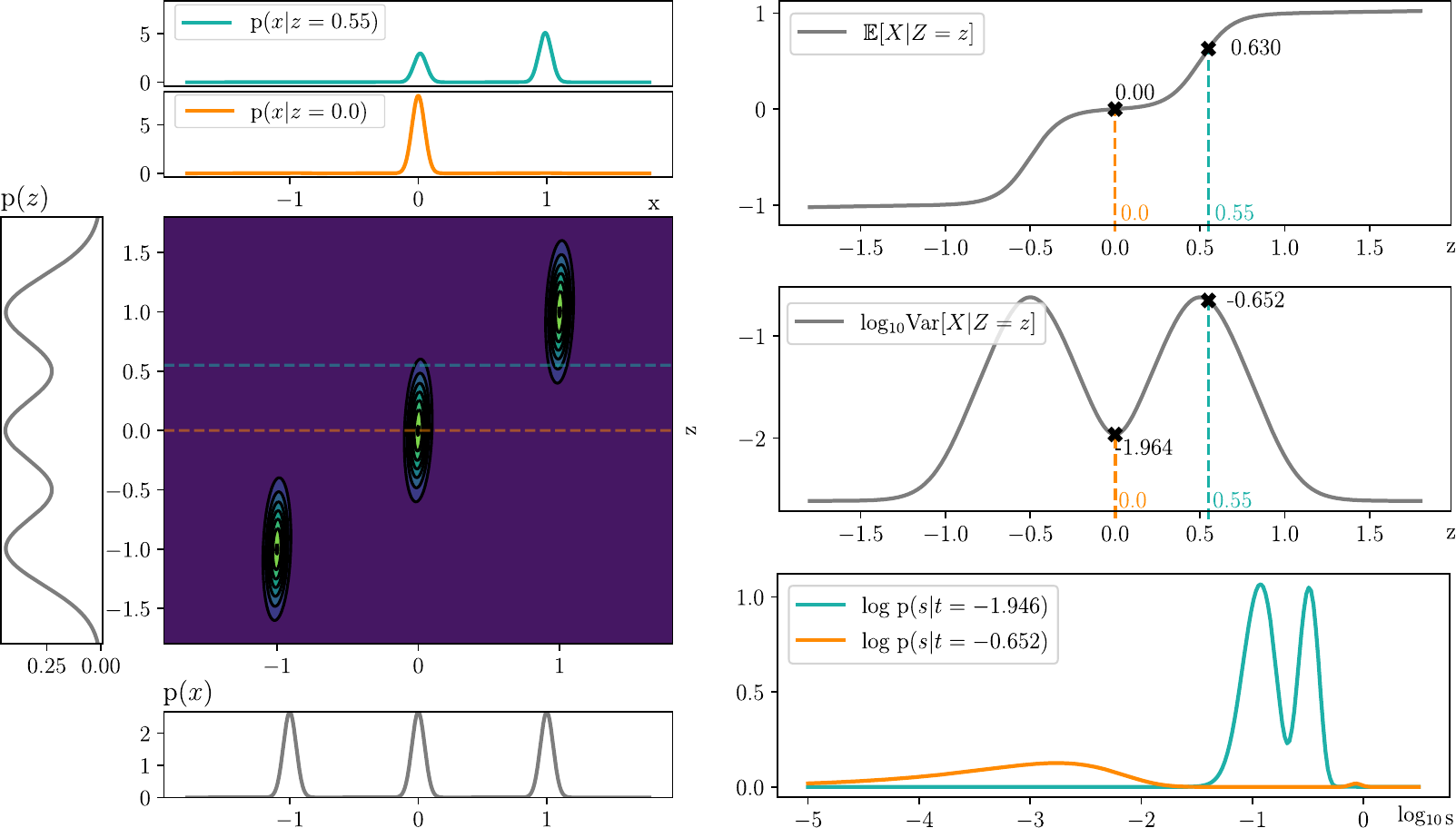}
\caption{Left: Joint distribution $\prob(x,z)$ with marginal distributions $\prob(x)$, $\prob(z)$ and posterior distributions $\prob(x|z)$. Right: Posterior expectation $\E[X|Z=z]$ and posterior variance $t=\Var[X|Z=z]$ as functions of $z$ as well as the conditional distribution of the error $p(s|t)$ for specific instances of $t$. As in later usage, posterior variance and error are already presented in logarithmic scaling.}
\label{fig:PDF}
\end{figure}

\subsection{Toy example in 1D}
Let us consider a toy example with known, continuous probability distributions. In this case we can work with exact quantiles instead of empirical estimates using conformal prediction.%
Let $X$ be a uni-variate mixture of Gaussian distributions. More precisely, define the density of $X$ as $\prob(x) = \sum_{k=1}^K\alpha_k \mathcal{N}(c_k,\sigma^2_{x_k})$ with $\alpha_k>0$ and $\sum_{k=1}^K\alpha_k=1$ and let $Z = X + N$, with $N \sim \mathcal{N}(0,\sigma^2_z)$. 
The joint density of $(X,Z)$ reads as
\begin{equation}
    \prob(x,z) = \sum_{k=1}^K\alpha_k \mathcal{N}(\mu_k,\Sigma_k), 
    \label{eq:joint}
\end{equation}
with
\begin{align*}
\mu_k &=
\begin{pmatrix} 
c_k \\ 
c_k  
\end{pmatrix},\quad
\Sigma_k=
\begin{pmatrix} 
\sigma^2_{x_k} & \sigma^2_{x_k} \\ 
\sigma^2_{x_k} & \sigma^2_{x_k} +\sigma^2_z 
\end{pmatrix},
\end{align*}
and is visualized in \Cref{fig:PDF} for $(c_1,c_2,c_3) = (-1,0,1)$, $\sigma^2_{x_k} = 0.05^2$, $\alpha_k = \frac{1}{3}$ for all $k$ and $\sigma^2_{z} = 0.3^2$. There we show the joint distribution of $(X,Z)$ as well as the corresponding marginal distributions. We additionally plot the resulting posterior distribution for observations of $z=0.55$ and $z=0.0$, the posterior expected value, and the posterior variance as well as the error. The latter two are presented in logarithmic scale.
In this toy example we can explicitly compute the joint distribution of error and variance $(S,T)$ using a change of variables. The result can be found in \Cref{fig:cum1D}, where on the left we plot the joint distribution and on the right the cumulative conditional distribution with respect to $S$, $(s,t)\mapsto \Prob[S\leq s | T=t]$ is shown. The dashed line indicates the 0.9 quantile of the conditional error distribution. An application of the proposed method in this ideal scenario reads as follows: Given an observation $Z=z$, compute the point prediction $\hat{x}(z)$ and posterior variance $t(z)$. Then in \Cref{fig:cum1D} the $0.9$ quantile of the error is obtained as the value of $s$ where the dashed quantile line intersects the vertical line at $t=t(z)$. While this example is mathematically well-behaved, in imaging applications, neither are statistics of the posterior distribution mathematically tractable, nor do we have access to an unlimited amount of data. The former forces us to make use of sampling techniques while the latter demands the use of conformal prediction in order to retain theoretical guarantees. In \Cref{fig:cum1Dempiric} we illustrate the conformal prediction strategy for the toy example. We estimate the distribution empirically from $2\cdot 10^6$ synthetically generated samples. In the right figure we added the exact conditional quantiles as well as the estimation based on conformal prediction, demonstrating the accuracy of the approximation.

\begin{figure}[htb]
\centering
\includegraphics[width= 0.7\linewidth]{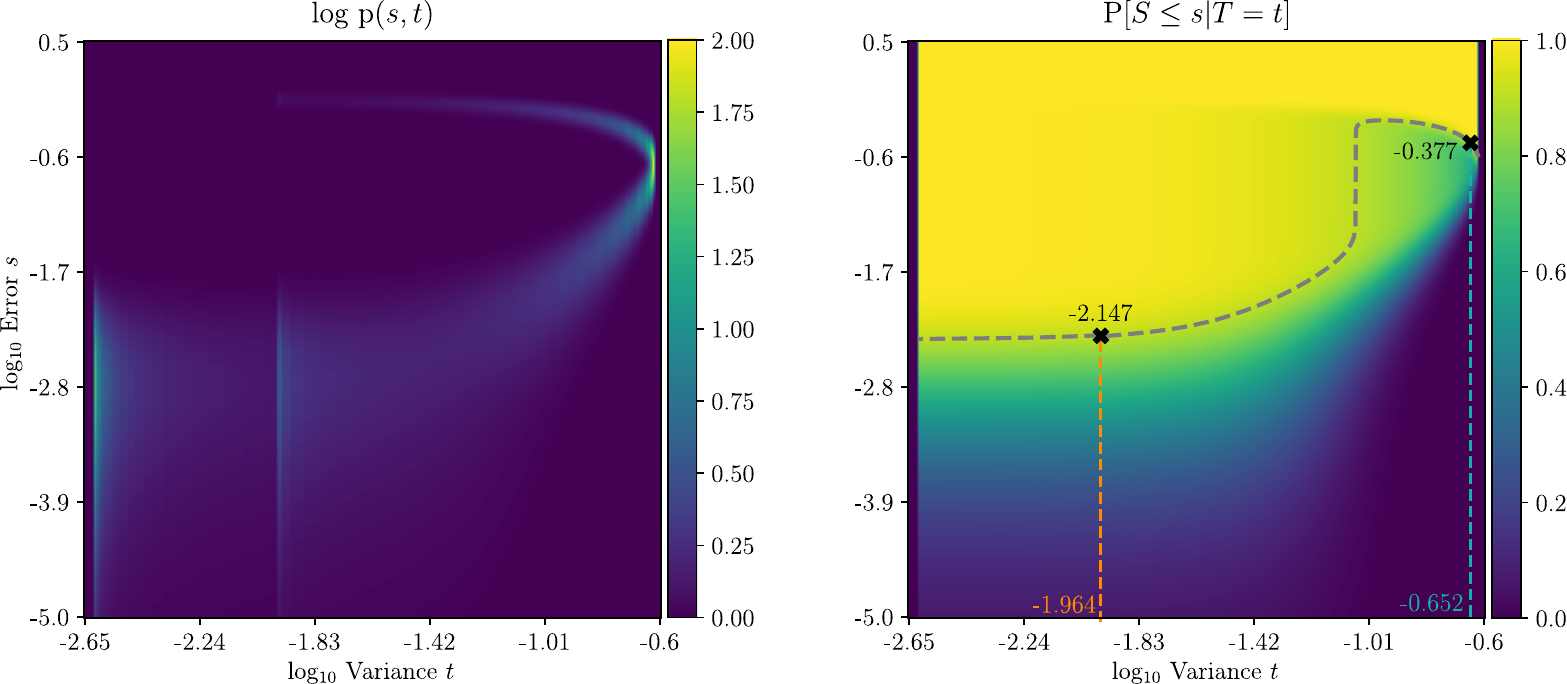}
\caption{Joint log-distribution $\log\prob(s,t)$ (left) and cumulative conditional distribution over error (right). The gray dashed line indicates the conditional 0.9 quantile.}
\label{fig:cum1D}
\end{figure}

\begin{figure}[htb]
\centering
\includegraphics[width= 0.7\linewidth]{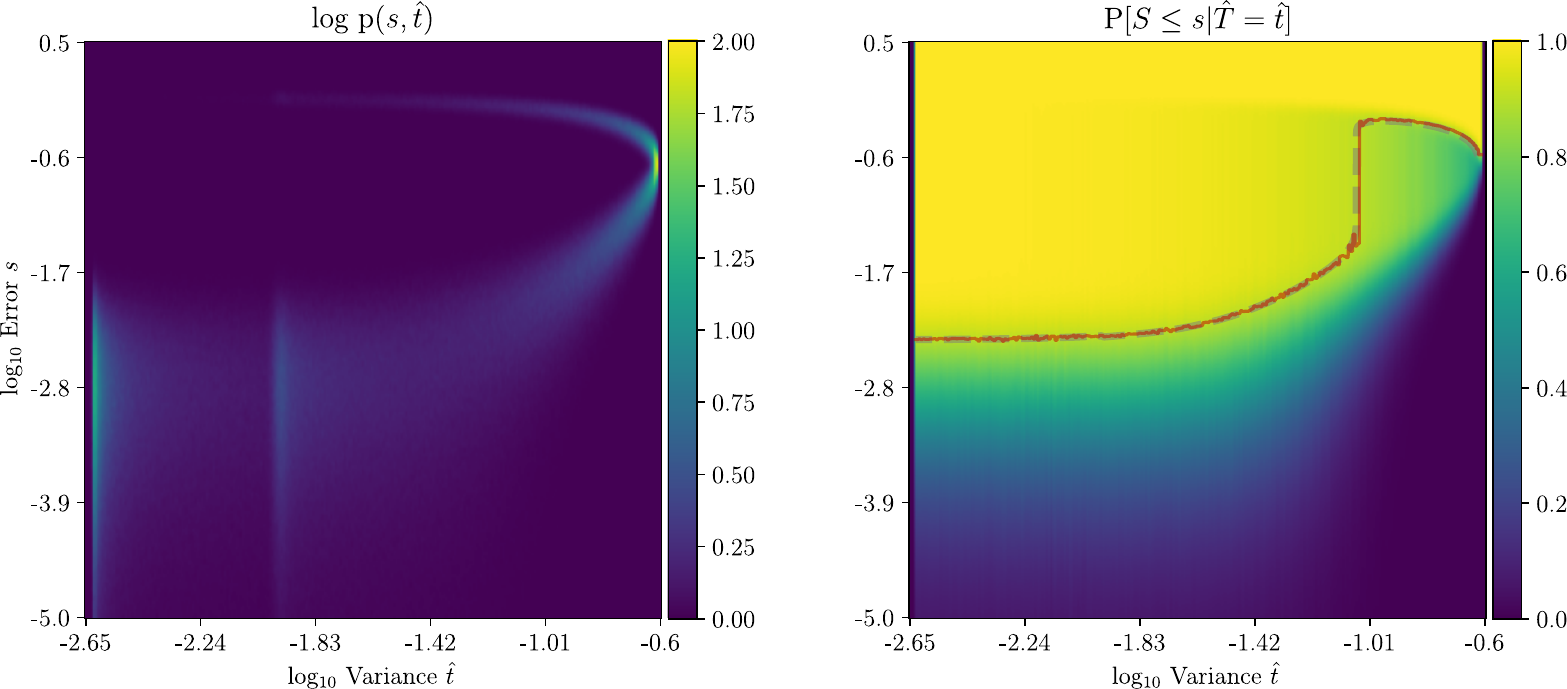}
\caption{Empirical joint log-distribution $\log\prob(s,t)$ (left) and empirical cumulative conditional distribution over error (right). The dashed gray line indicates the exact conditional 0.9 quantile of the distribution, the red lines indicates the respective estimated conformalized quantile.}
\label{fig:cum1Dempiric}
\end{figure}

\section{Algorithms for the Computation of Posterior Expectation and Variance}
Within our methods we need to compute approximations of posterior expectation and variance. For the typical models of image distributions, an analytical computation is not feasible. Hence, we are forced to compute statistics based on samples. In the following we introduce algorithms for sampling from distributions $\pi(x)$ from the exponential family $\pi(x) \propto \exp{\left(-E(x) \right)}$, on $\RR^d$, $d\geq 1$ with a potential (or energy) $E(x)$. In particular in the case of posterior sampling from a Bayesian model
\[E(x) = -\log(\prob(z|x)) - \log(\prob(x)) = -\log(\prob(x|z)) + const.\]
which is equal to the log-posterior up to an unknown normalization constant. The presented algorithms yield samples $(x_k)_{k=1}^K$ from to the distribution $\pi(x) = -\log(\prob(x|z))$ based on the observation $z$. Using this samples, as point prediction and variance approximation we will use the sample mean and sample variance, respectively, i.e., 
\begin{equation}\label{eq:used_estimators}
    \hat{x}(z)=\frac{1}{K}\sum\limits_{k=1}^K x_k,\quad \hat{t}(z)=\frac{1}{K}\sum\limits_{k=1}^K (x_k-\hat{x}(z))^2.
\end{equation}
After dealing with differentiable energies in \Cref{ssec:ULA}, in \Cref{ssec:PDLV}, we propose a novel sampling method called unadjusted Langevin primal-dual algorithm (ULPDA), which combines Langevin dynamics with a primal-dual algorithm. Further, in \Cref{sec:markov_random_fields}, we introduce Belief Propagation (BP) on Markov random fields (MRF) which is later used as a comparison benchmark for the Langevin algorithm.

\subsection{Langevin Algorithms for Differentiable Energies}\label{ssec:ULA}
Assume that the energy $E(x)$ is differentiable with Lipschitz continuous gradient, i.e., there exists a constant $L \geq 0$ such that
\[
\norm[2]{\nabla E(x) - \nabla E(y)} \leq L \norm[2]{x-y},
\]
for all $x,y \in \RR^d$. Langevin based sampling algorithms are discretizations of the over-damped Langevin diffusion equation~\cite{legy97,roberts1996exponential},
\begin{equation}
   \rm{d}X_t = -\nabla E(X_t) + \sqrt{2} \rm{d}B_t,  
   \label{eq:LVcont}
\end{equation}
where $\rm{d}B_t$ is a $d$-dimensional Brownian motion. The stationary distribution of $(X_t)_t$ is the distribution $\pi$ itself \cite{roberts1996exponential}. The striking advantage of the Langevin diffusion equation over classical density-based sampling algorithms is that only information about gradients is required and, hence, no normalization constant is needed, that is, it is sufficient to have knowledge of the distribution $\pi$ up to a multiplicative scalar factor. Intuitively, in \eqref{eq:LVcont} a gradient ascent towards a mode of $\pi$ is performed but the declination injected by the Brownian motion ensures that generated samples cover the entire space exactly with density $\pi(x)$. The scaling factor $\sqrt{2}$ balances the deterministic force induced by the gradient of the log distribution and the stochastic force induced by the Brownian motion.

For a numerical simulation of the stochastic Langevin diffusion equation, the simplest approach is the Euler-Maruyama discretization~\cite{klo13} which is rooted in the classical forward-Euler method. This yields an iterative algorithm, known as the Unadjusted Langevin algorithm, which essentially resembles a gradient descent on the energy $E(x)$ combined with noise (ULA)~\cite{durmus2018efficient}, see \Cref{algo:unaj_langevin}, where
\begin{algorithm}
\setstretch{1.15}
\caption{Unadjusted Langevin Algorithm (ULA)}\label{algo:unaj_langevin}
\textbf{Input:} Initial values $x_0 \in \R^d$, and step sizes $\tau_k > 0 $.\\
\textbf{Output:}  Sequence $\{x_k\}_{k\geq 0}$.
\begin{algorithmic}[1]
      \For{$k=0,\ldots,K-1$}   
      \State $x_{k+1} = x_k - \tau_k \nabla E(x_k) + \sqrt{2 \tau_k} \xi_k $
      \EndFor
\end{algorithmic}
\end{algorithm}
$\xi_k \sim \mathcal{N}(0,\Id_d)$ is i.i.d. from a $d$-dimensional Gaussian distribution. It has been shown that the Markov chain $\{x_k\}_{k\geq 0}$ generated by the ULA converges to a stationary distribution $\pi_\tau(x)$ and the target distribution $\pi(x)$ is obtained for $\tau_k \to 0$ and $\sum_k \tau_k = \infty$ \cite{lamberton2002recursive,lamberton2003recursive}. Conversely, for $\tau_k > 0$, the stationary distribution of $\{x_k\}_{k\geq 0}$ contains some sort of bias and hence remains \emph{unadjusted}. By the inclusion of an additional Metropolis test, the ULA can be turned into the Metropolis adjusted Langevin algorithm (MALA)~\cite{gar96}, whose stationary distribution is $\pi$, even in case that $\tau_k$ remains strictly bounded away from $0$.

\subsection{Primal-Dual Langevin Algorithm for Non-differentiable Energies}\label{ssec:PDLV}
We will now investigate the case of a non-differentiable potential with the aim of developing a Langevin based sampling algorithm which is also applicable, e.g., for potentials 
such as those incorporating TV-regularization, see \Cref{sec:tvl2_denoising}. A remedy making ULA applicable nonetheless is smoothing the potential $E(x)$, either by directly replacing any non-smooth function by a smooth approximation or by considering other model-based smoothness approaches such as the Moreau envelope~\cite{durmus2018efficient}, whose gradient can be computed efficiently whenever the proximal map can be computed efficiently. However, we propose a different approach. Let us consider the general case of an energy of the form $E:\R^d\rightarrow\R$,
\begin{equation}\label{eq:pd_general_energy}
E(x) = \sup_{p\in\R^{d'}} \scp{D x}{p} + g(x) - f^*(p).
\end{equation}
with $D: \R^d \to \R^{d'}$ a linear operator with operator norm $L = \norm{D}$, $f,g$ proper, convex, and lower-semicontinuous and $f^*$ the convex conjugate of $f$. Potentials of this type can be efficiently minimized by first-order primal-dual algorithms~\cite{chpo11}. Interestingly, the flexibility of the step size condition in the primal-dual algorithm opens up the possibility to turn the primal-dual algorithm into a primal algorithm ($\tau_k \to 0, \; \sigma_k \to \infty$), minimizing  a particular Moreau-envelope of the primal problem, or into a dual algorithm ($\tau_k \to \infty, \; \sigma_k \to 0$), maximizing a particular Moreau-envelope of the dual problem. Therefore, it is quite natural to propose an Unadjusted Langevin Primal-Dual Algorithm (ULPDA), which is a primal-dual algorithm with small primal step size and the Gaussian noise injected into the primal variable. The algorithm is summarized in \Cref{algo:pd_langevin}. 
\begin{algorithm}
\setstretch{1.15}
\caption{Unadjusted Langevin Primal-Dual Algorithm (ULPDA)}\label{algo:pd_langevin}
\textbf{Input:} Initial values $x_0 \in \R^d$, $p_0 \in \R^{d'}$, and step sizes $\sigma_k \tau_k L^2 \le 1 $, $\theta_k \in [0,1]$, $L = \|D\|$\\
\textbf{Output:}  Sequence $\{x_k\}_{k\geq 0}$.
\begin{algorithmic}[1]
      \For{$k=0,\ldots,K-1$}
      \State \begin{equation}\label{eq:algo_pb}
        \begin{cases}
            \bar p_{k} & = p_{k} + \theta_k( p_{k} -  p_{k-1}) , \\
            x_{k+1}& = \prox_{\tau_k g} (x_k - \tau_k D^* \bar p_k )+ \sqrt{2 \tau_k} \xi_k, \\
            p_{k+1} &= \prox_{\sigma_k f^\ast} (p_k + \sigma_k D x_{k+1} ).
            \end{cases}
      \end{equation}
      \EndFor
\end{algorithmic}
\end{algorithm}

\begin{remark}\label{rmk:thinning}\
\begin{itemize}
\item Consecutive iterates $(x_k,x_{k+1})$ from Langevin algorithms are not independent in general, which is why it is recommended to compute the approximations in \eqref{eq:used_estimators} based on the sample $(x_{kH})_{k}$, instead of $(x_k)_k$, where $H\in\N$, $H>>1$ defines the number of skipped samples in order to obtain a thinned version of the Markov chain. For sufficiently large $H$, it can be assumed that after thinning $(x_{kH},x_{kH+H})$ are approximately i.i.d. This, however, leads to a vast increase of computational effort. Throughout our experiments we set the thinning length to $H=1$. As explained in \Cref{rmk:choice_T_x_bar} the theoretical guarantees are not affected by heuristics in this part of the method. Moreover, an empirical justification of this choice can be found in \Cref{sec:appendix_thinning}, where we show that the effect of thinning regarding the desired statistical quantities is negligible.
\item In order to reduce memory consumption, in practice $\hat{x}$ and $\hat{t}$ are computed from the sample according to Welford's algorithm \cite{wel62}.
\end{itemize}
\end{remark}

\subsection{Belief Propagation}\label{sec:markov_random_fields}
Conversely to the sampling based methods introduced in the previous sections, Belief Propagation (BP) \cite{pea82,tapfre03} is a sampling free algorithm for computing marginals of a given discrete multivariate distribution. The method is designed for graphical models and, in particular, we consider the application to Markov random fields (MRF) \cite{lau1996}. Let $\mathcal{G} = (\mathcal{V},\mathcal{E})$ be a graph with nodes $\mathcal{V}$ and edges $\mathcal{E}$, where in the case of imaging each node $\nu \in \mathcal{V}$ corresponds to an image pixel, and edges $(\nu,\nu') \in \mathcal{E} \subseteq \mathcal{V}^2$ define a neighborhood system of pixels. In a MRF we are given a random vector $X$ corresponding to the graph $\mathcal{G}$, consisting of discrete random variables $X_\nu$ for $\nu\in\mathcal{V}$ which take values in a label set $\mathcal{L} = \{l_1,...,l_{L}\}$. The Markov property is encoded in the model by requiring that for $I,J\subset\mathcal{V}$ not intersecting or adjacent,
$X_I \indep X_J|X_{\mathcal{V}\setminus (I\cup J)}$, that is, $X_I$ independent of $X_J$ conditioned on $X_{\mathcal{V}\setminus (I\cup J)}$.
Using BP, the marginal distributions
\[
    \Prob[X_\nu=x_\nu] = \sum_{\substack{x'\in \mathcal{L}^{|\mathcal{V}|}:\\x'_\nu=x_\nu}} \Prob[X=x'].
\]
can be computed efficiently with results being exact in case of tree-like graphs and highly accurate approximations otherwise. In turn the pixel-wise posterior expectation $\hat{x}$ and variance $\hat{t}$ are computed as
\[
\hat{x}_\nu = \sum_{x_\nu \in \mathcal{L}} x_\nu\Prob[X_\nu=x_\nu], \hspace{2em} \hat{t}_\nu = \sum_{x_\nu \in \mathcal{L}} (x_\nu - \hat{x}_\nu)^2\Prob[X_\nu=x_\nu],
\]
Due to its high accuracy, the result of the BP algorithm can be used as a benchmark for the sampling based methods.

\begin{figure}[htb]
\includegraphics[width=\linewidth]{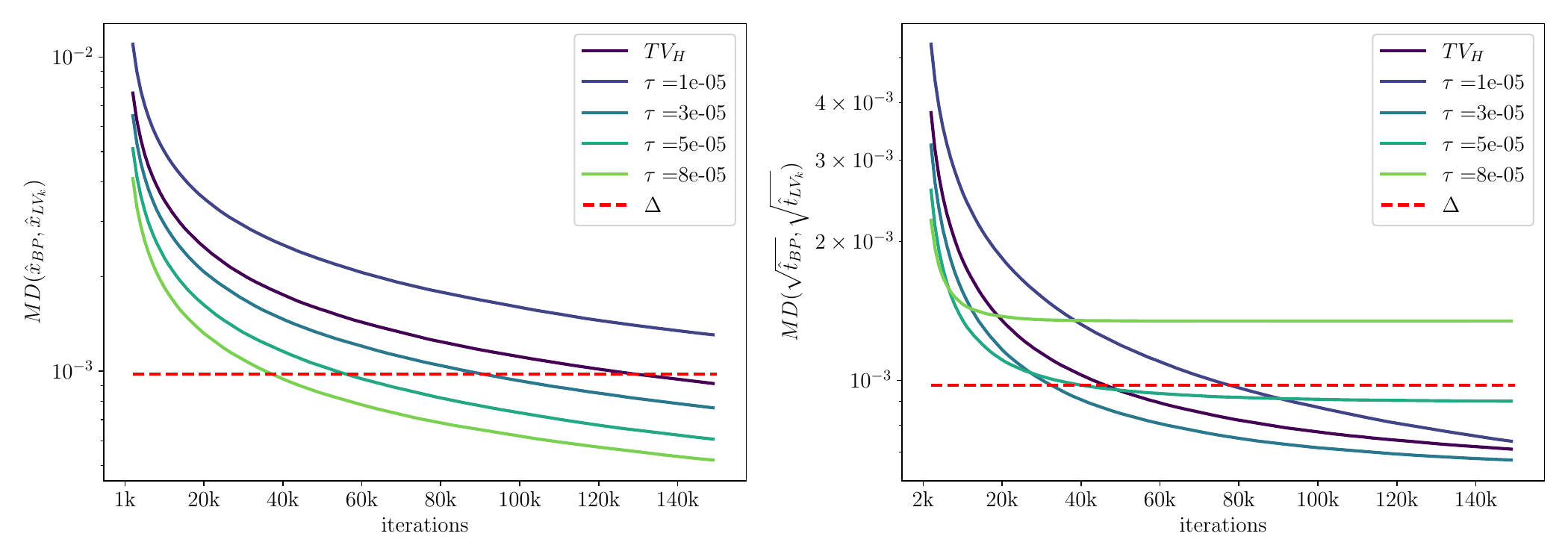}
\caption{Mean absolute difference of MMSE and posterior variance estimated using ULPDA for different values of $\tau$ and ULA with Huber TV ($\TV_h$) compared to BP as function of the number of samples with discretization threshold $\Delta=\frac{1}{1024}$.}
\label{fig:bp_comp_abs}
\end{figure}

\section{Numerical experiments}
\label{sec:num}
In this section, we present numerical results of various experiments with the proposed method, verifying our theoretical predictions. The source code to reproduce basic experiments can be found on \cite{error_estimation_source_code}. Throughout the experiments, we compute the error quantiles $\hat{s}_q$ of the reconstruction according to \Cref{algo:error_estimation} in different settings. In our first experiment, presented in \Cref{sec:tvl2_denoising}, we consider total variation based image denoising where we also present a comparison of the ULPDA results to the highly accurate BP results. Afterwards we replace the total variation regularization with a fields of experts \cite{ro09} regularizer in \Cref{sec:foe_denoising} and with total deep variation \cite{koef21} in \Cref{sec:tdv_denoising}. In \Cref{sec:mri_recon} we consider the inverse problem of accelerated magnetic resonance imaging showing the flexibility of our approach with respect to the forward operator of the inverse problem. In \Cref{sec:comp} we compare the proposed method to the one from \cite{angelopoulos2022image}. While the theory and the algorithms were presented for $X$ being one-dimensional, we consider high-dimensional image data in the following. This is done by applying our methods to each pixel separately, that is, the error quantile for an image is computed for each pixel separately based on the approximate posterior variance of this pixel's gray scale value. Doing so, each image sample provides as many samples of the relation between posterior variance and error as its number of pixels. However, strictly speaking, only one of these pixel-wise samples could be used in the quantile estimation, since different pixels of the same image are not independent in general. Our empirical investigation of this effect, see \Cref{sec:neighbouring_pixels}, however, has shown that using all pixels of a single image has a negligible effect on the obtained coverage in practice. For this reason, we use all pixels of each image sample in the subsequent experiments, and refer to \Cref{sec:neighbouring_pixels} for further details.

To evaluate our method quantitatively, we use two different metrics, the coverage and the magnitude of the estimated quantiles, where the former is a verification of the theoretical results as well as a measure for tightness of the quantiles and the latter a measure of the quality of the estimate. We define the coverage as the rate of correct predictions where the true reconstruction error is smaller or equal than the estimated quantile. Precisely, for an image with true error $s\in \RR^{M\times N}$ and predicted pixel-wise quantile $\hat{s}_q\in \RR^{M\times N}$, the coverage is defined as
\[
\text{coverage} = \frac{\left| \left\{(i,j)\mid s_{i,j}\leq(\hat{s}_q)_{i,j}\right\} \right|}{NM}.
\]
and for a data set containing multiple images we compute the mean of the coverage over all test images. The best method is the one yielding the smallest estimated quantiles satisfying the prescribed coverage. 

For our experiment we use gray scale images with values in $[0,1]$ from the BSDS training set corrupted with zero mean Gaussian noise with $\sigma=15/255$. For the estimation of the distribution of $(S,\hat{T})$ 400 images are used and we evaluate our method on a test set of 68 unseen images, referred to as BSDS 68 in the following.

\begin{figure}[htb]
\centering
\resizebox{0.8\textwidth}{!}{%
\includegraphics[scale=1]{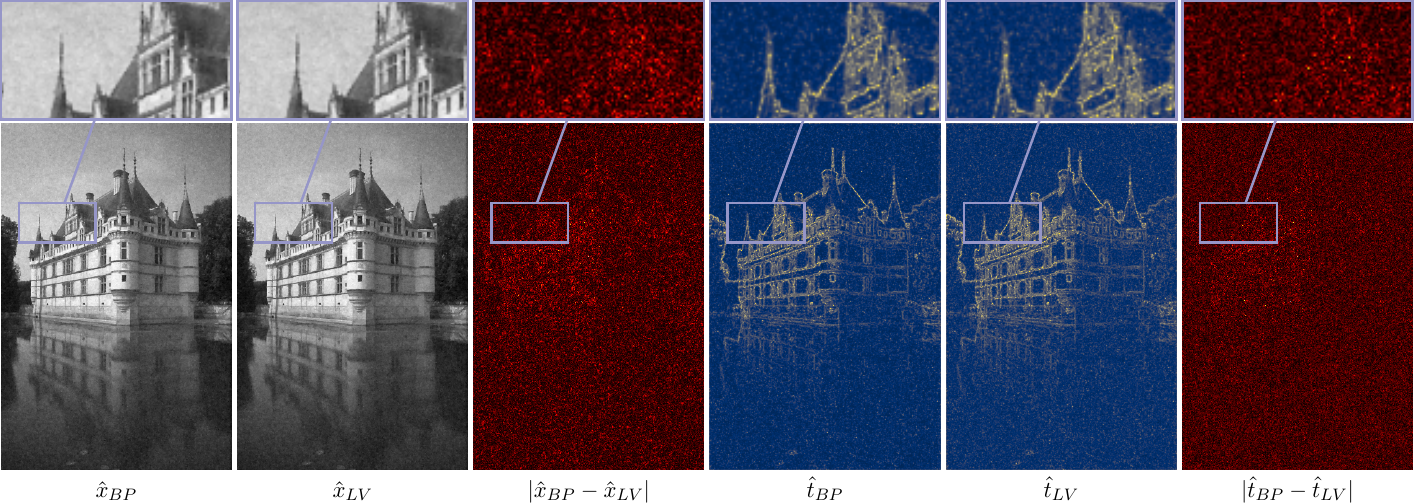}
}

\caption{Comparison of ULPDA-sampling results and BP reconstruction results for denoising with $\sigma=15/255$. The ULPDA reconstructions were obtained for a primal step size of $\tau=\expnumber{5}{-5}$ and 50k iterations ($\expnumber{8}{-4}$~\protect\includegraphics[width=1.5cm,height=.2cm]{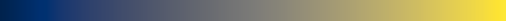}~$\expnumber{3.5}{-2}$, $0$~\protect\includegraphics[width=1.5cm,height=.2cm]{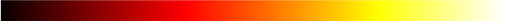}~$0.01/\expnumber{1}{-3}$ for MMSE and variance difference respectively).}
\label{fig:bp_comp_abs_vis}
\end{figure}

\subsection{TV-$\ell_2$ Denoising}\label{sec:tvl2_denoising}
In denoising the aim is to recover an unknown noise-free image $x \in \R^{M\times N}$ from a noisy observation $z\in \R^{M\times N}$, where the relationship between $x$ and $z$ is described by the following linear forward model
\[
    z = x + \nu,
\]
where each noisy pixel $\nu_{i,j}$ is independently sampled from a Gaussian distribution with zero mean and variance $\sigma^2$. Based on the forward model and the i.i.d. Gaussian assumption, the likelihood is simply given by the Gaussian distribution
\[
   \prob(z|x) \propto  \exp{\left( -\frac{1}{2\sigma^2} \norm[2]{x-z}^2\right)}.
\]
In this particular experiment, as in ~\cite{ch97,rof92},  we assume a prior~\footnote{Note that due to the lack of coercivity of $\TV(x)$ its associated prior is not well defined. However, by combining the prior with the likelihood function, the posterior is well-defined.}  based on the total variation,
\[
    \prob(x) \propto \exp{\left( -\frac{\TV(x)}{\lambda}\right)},
\]
where $\lambda$ plays the role of a variance parameter of the prior. Combining the likelihood with the prior, the negative log-posterior respectively the variational model is given by
\begin{equation}\label{eq:tvl2}
   -\log(\prob(x|z)) = \frac{1}{2\sigma^2} \norm[2]{x-z}^2 + \frac{1}{\lambda} \TV(x) + const,
\end{equation}
where we recall that an additive constant term is irrelevant for sampling via Langevin algorithms as only evaluations of the gradient of $-\log(\prob(x|z))$ are used. In what follows, we consider the anisotropic total variation
\[
\TV(x) = \sum_{i,j} |x_{i+1,j} - x_{i,j}| + |x_{i,j+1} - x_{i,j}| = \norm[1]{\Df x},
\]
where $\Df : \RR^{M\times N} \to
\RR^{M\times N\times 2}$ is a suitable finite differences operator, defined by
\begin{equation} \label{eq:defDx}
\begin{split}
  (\Df x)_{i,j,1} &= \begin{cases}
    x_{i+1,j} - x_{i,j} & \text{if }  1 \leq i < M,\\
    0 & \text{else},  \end{cases} \\*
  (\Df x)_{i,j,2} &= \begin{cases}
    x_{i,j+1} - x_{i,j} & \text{if }  1 \leq j < N,\\
    0 & \text{else}. 
  \end{cases}
\end{split}
\end{equation}
See for example~\cite{chpo16} for further information. The anisotropic total variation is chosen in order to render the problem amenable for the BP algorithm presented in \Cref{sec:markov_random_fields}. While we make use of the novel ULPDA, \Cref{algo:pd_langevin}, for the error estimation, in the following paragraph we first confirm functionality and estimate hyperparameters of this novel algorithm by a comparison to the highly accurate results obtained with BP.

\begin{figure}[htb]
\centering
\includegraphics[width=0.7\linewidth]{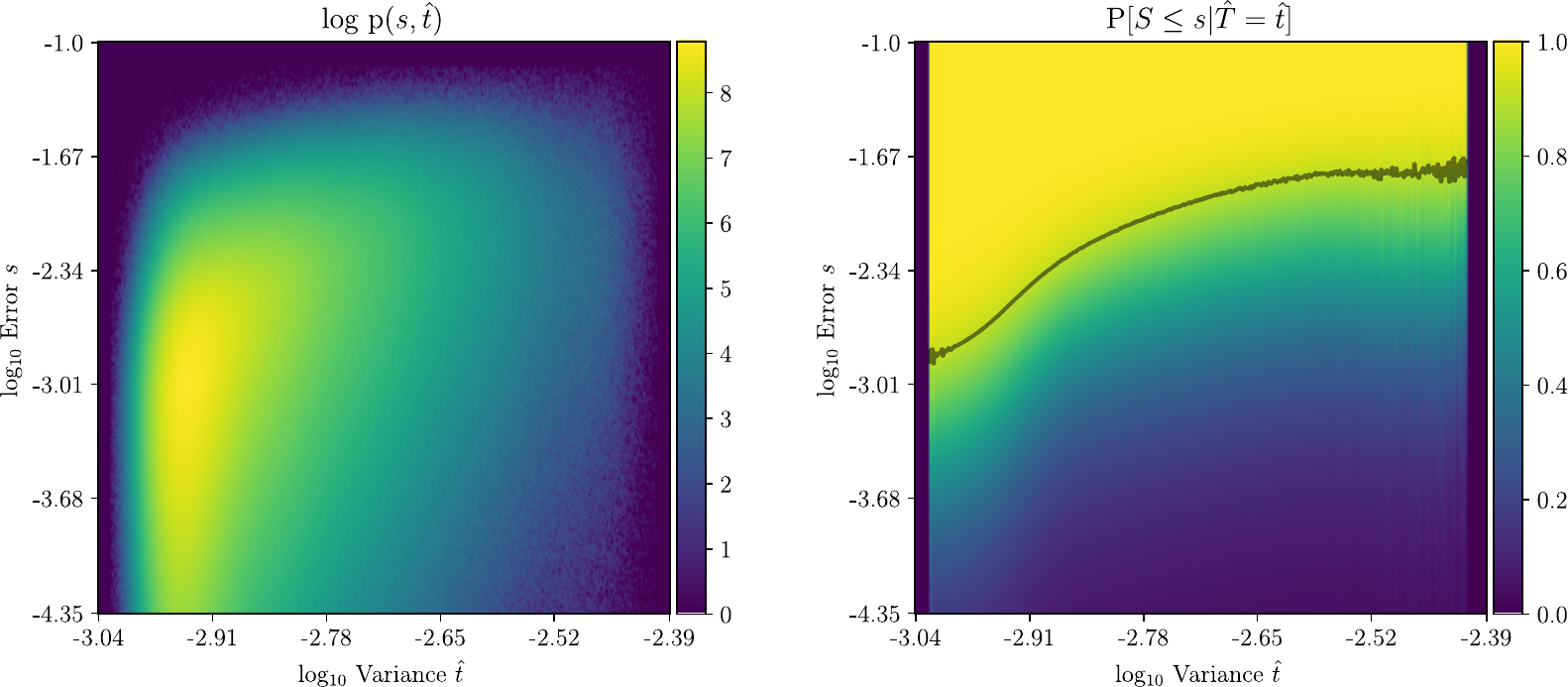}
\label{fig:empirical_dist_TV_l2}
\caption{TV-$\ell_2$ Denoising $\sigma = 15/255$. The left figure shows the joint log density of error S and estimated variance $\hat{T}$, while the right figure shows the conditional cumulative distribution for the error.
The black line indicates the conformalized 0.9 quantile.}
\end{figure}

\paragraph{A Comparison of ULPDA and Belief Propagation}
For BP to be applicable we have to discretize the TV denoising problem. Thus, we replace the continuous gray scale \emph{label} space $[0,1]$ by the discretization $l_k = k/L$, $k=0,...,L$. The node set $\mathcal{V}=\{(i,j)\;|\;i=1,\dots,M,\; j=1,\dots,N\}$ is the set of all image pixels and the edge set $\mathcal{E}$ the set of $4$-nearest neighbors on the pixel grid. In particular, in this setting the the log posterior probability factors nicely
\begin{equation}\label{eq:mrf_logdist}
    \begin{aligned}
        - \log \Prob(X=x|Z=z)= &- \log \Prob(Z=z|X=x)- \log \Prob(X=x) +const.\\
        =&\sum_{\nu \in \mathcal{V}} g_{\nu}(x_\nu)+\sum_{(\nu, \nu') \in \mathcal{E}} f_{\nu, \nu'}(x_\nu,x_\nu') + const,
    \end{aligned}
\end{equation}
where $g_\nu:\mathcal{L}\rightarrow \R$ are unary terms representing the data likelihood and the pairwise terms $f_{\nu, \nu'}:\mathcal{L}^2\rightarrow \R$ correspond to the prior. These are defined as
\[g_\nu(x_\nu) = \frac{1}{2\sigma^2}(x_\nu-z_\nu)^2,\quad f_{\nu, \nu'}(x_\nu, x_\nu') = \frac{1}{\lambda}|x_\nu-x_\nu'|.\]
In the following experiment we set the number of labels to $L=1024$ and use the Sweep Belief Propagation algorithm with 10 iterations to obtain the respective results.
In \Cref{fig:bp_comp_abs}, for a noisy observation $z$ with $\sigma=15/255$, on the left side we show the mean absolute difference $MD(\hat{x}_{BP},\hat{x}_{LV,k})=\frac{1}{MN} \sum_{i,j} |\hat{x}_{LV,k,ij}-\hat{x}_{BP,ij}|$ of the MMSE computed with BP ($\hat{x}_{BP}$) to the one computed with ULPDA ($\hat{x}_{LV,k}$) for different values of $\tau$ in dependence on the number of samples $K$ of ULPDA. On the right hand side we compare the posterior variances $\hat{t}_{BP}$ and $\hat{t}_{LV,k}$ in the same manner.
Additionally, we compare with the results of a smooth approximation of the total variation by Huber TV~\cite{chpo16} where we use the ULA scheme \Cref{algo:unaj_langevin} for sampling.
As \Cref{fig:bp_comp_abs} indicates, the UPDLA results converge faster for primal step sizes of $\tau >\expnumber{1}{-5}$.
Based in this results, for subsequent experiments with UPDLA, a primal step size $\tau=\expnumber{5}{-5}$ and $K = 50.000$ iterations is used since this yields a good tradeoff between approximation accuracy and computational cost. For the task of denoising, we further found that no burn in phase is necessary since a steady state is acquired within a few iterations.
The qualitative visual comparison between the BP and ULPDA results for the favoured hyperparameters, depicted in \Cref{fig:bp_comp_abs_vis}, show that no visual deviation can be observed between $\hat{x}_{LV}$ and $\hat{x}_{BP}$. 
The same holds for the estimated posterior variances $\hat{t}_{LV}$ and $\hat{t}_{BP}$. A visualization of the pixel marginals obtained with the two methods can be found in \Cref{fig:margs}.

\paragraph{Results for Error Estimation}
In \Cref{fig:empirical_dist_TV_l2} we show the resulting estimated distribution of $(S,\hat{T})$ obtained from posterior sampling with ULPDA. Exemplary reconstruction results alongside the pixelwise estimated quantile are shown in \Cref{fig:rec_1e-4}. The quantitative results for this experiment can be found in \Cref{tab:quantquant}.

\begin{figure}[htb]
\centering
\resizebox{0.8\textwidth}{!}{%
\includegraphics[scale=1]{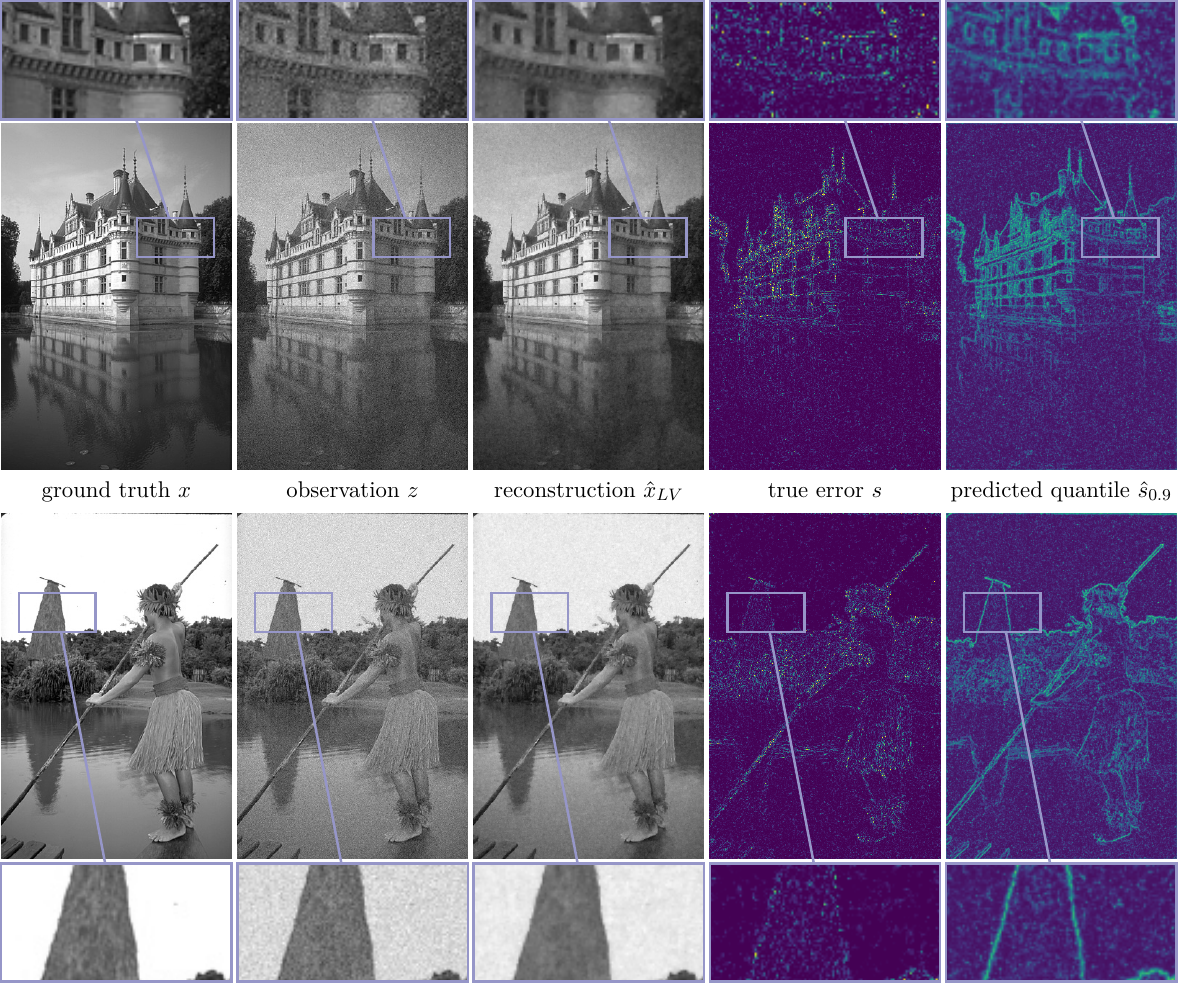}
}
\caption{TV-$\ell_2$ Denoising. Reconstruction results for samples from the BSDS dataset and a noise level of $\sigma = 15/255$.
From left to right: ground truth image~$x$, observation~$z$,  reconstruction~$\hat{x}_{LV}$, true error~$s$, and predicted $0.9$ quantile $\hat{s}_{0.9}$ ($0$ \protect\includegraphics[width=1.5cm,height=.2cm]{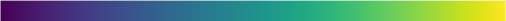} $0.02$).}
\label{fig:rec_1e-4}
\end{figure}

\subsection{Fields-of-Experts-$\ell_2$ Denoising}\label{sec:foe_denoising}
In this section we consider the task of denoising again, but replace the total variation regularization by a more complex, non-convex data driven prior, the so-called Fields of Experts (FoE) \cite{ro09} which reads as
$\prob(x)~\propto~\exp{\left( -\frac{FoE(x)}{\lambda}\right)}$ with
\[
    FoE(x) = \sum_{c=1}^C\sum_{i=1}^{M}\sum_{j=1}^{N} \phi_c((k_c * x)_{i,j}),
\]
$C \in \N$ experts of the form $\phi_c(t) =\alpha_c \log (1+t^2_i)$ with $\alpha_c$ and 2D convolution kernels $k_c$ being the trainable parameters of the method. The respective negative log-posterior reads as
\begin{equation}\label{eq:FoEl2}
   -\log(\prob(x|z)) = \frac{1}{2\sigma^2} \norm[2]{x-z}^2 + \frac{1}{\lambda} FoE(x) + const.
\end{equation}
Since the described prior is non-convex, the ULPDA algorithm is not applicable. However, sampling from the posterior is still possible using the ULA algorithm described \cref{algo:unaj_langevin} 

\begin{figure}[htb]
\centering
\includegraphics[width=0.7\linewidth]{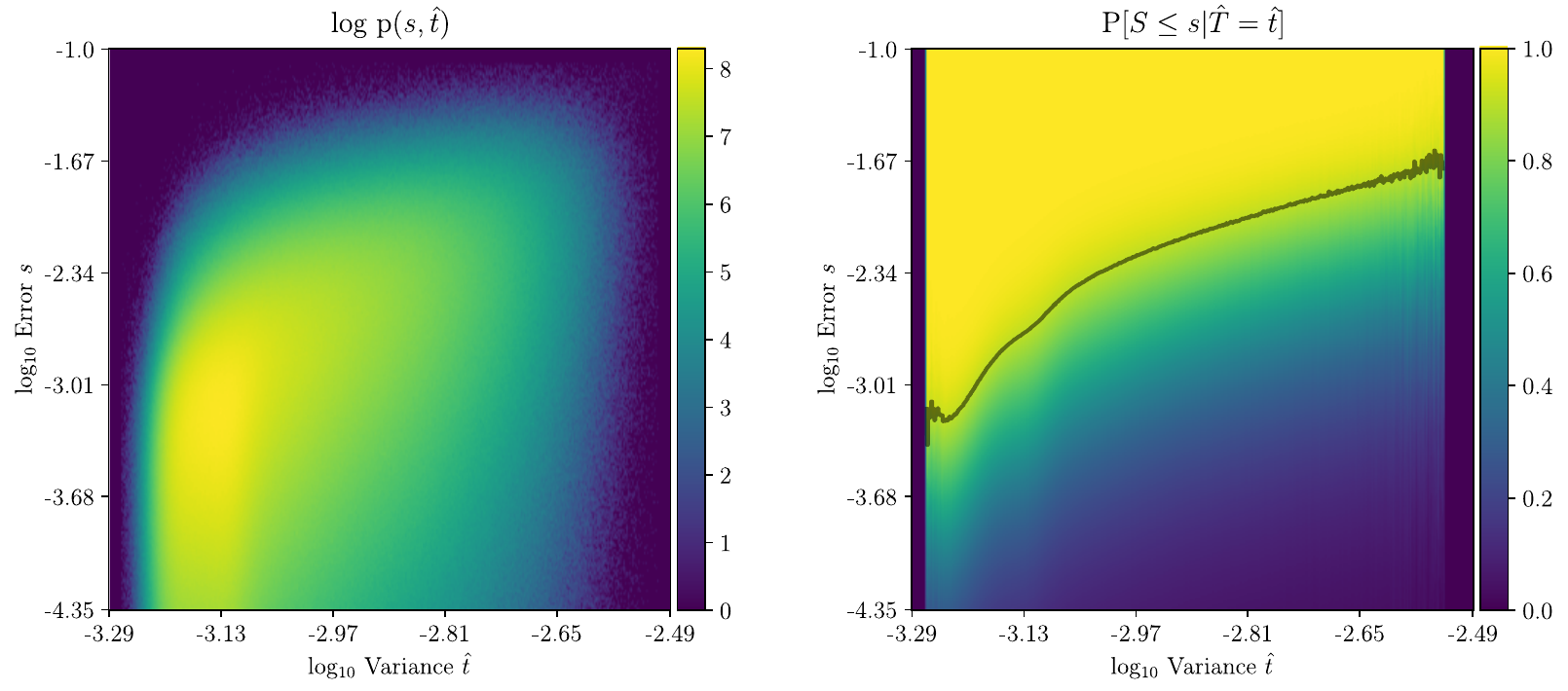}
\label{fig:foe_estimated_distribution}
\caption{FoE-$\ell_2$ Denoising $\sigma = 15/255$. The left figure shows the joint log density of error S and estimated variance $\hat{T}$, while the right figure shows the conditional cumulative distribution for the error.
The black line indicates the conformalized 0.9 quantile.}
\end{figure}

The FoE prior used for this experiment was trained on the BSDS dataset in a bi-level optimization scheme as presented in~\cite{ch14}.
The ULA algorithm is performed with 50k iterations. A step size of $\tau=\expnumber{1}{-4}$ and a regularization parameter of $\lambda = 0.125$ were determined empirically.
As before we use 400 images from the BSDS data set to estimate the distribution of $(S,\hat{T})$ and evaluate on a test set of 68 unseen images.
The estimated error-variance distribution can be found in \Cref{fig:foe_estimated_distribution}. Qualitative results are shown in \Cref{fig:rec_Foe_1e-4} and quantitative results in \Cref{tab:quantquant} again.

\begin{figure}[htb]
\centering
\resizebox{0.8\textwidth}{!}{%
\includegraphics[scale=1]{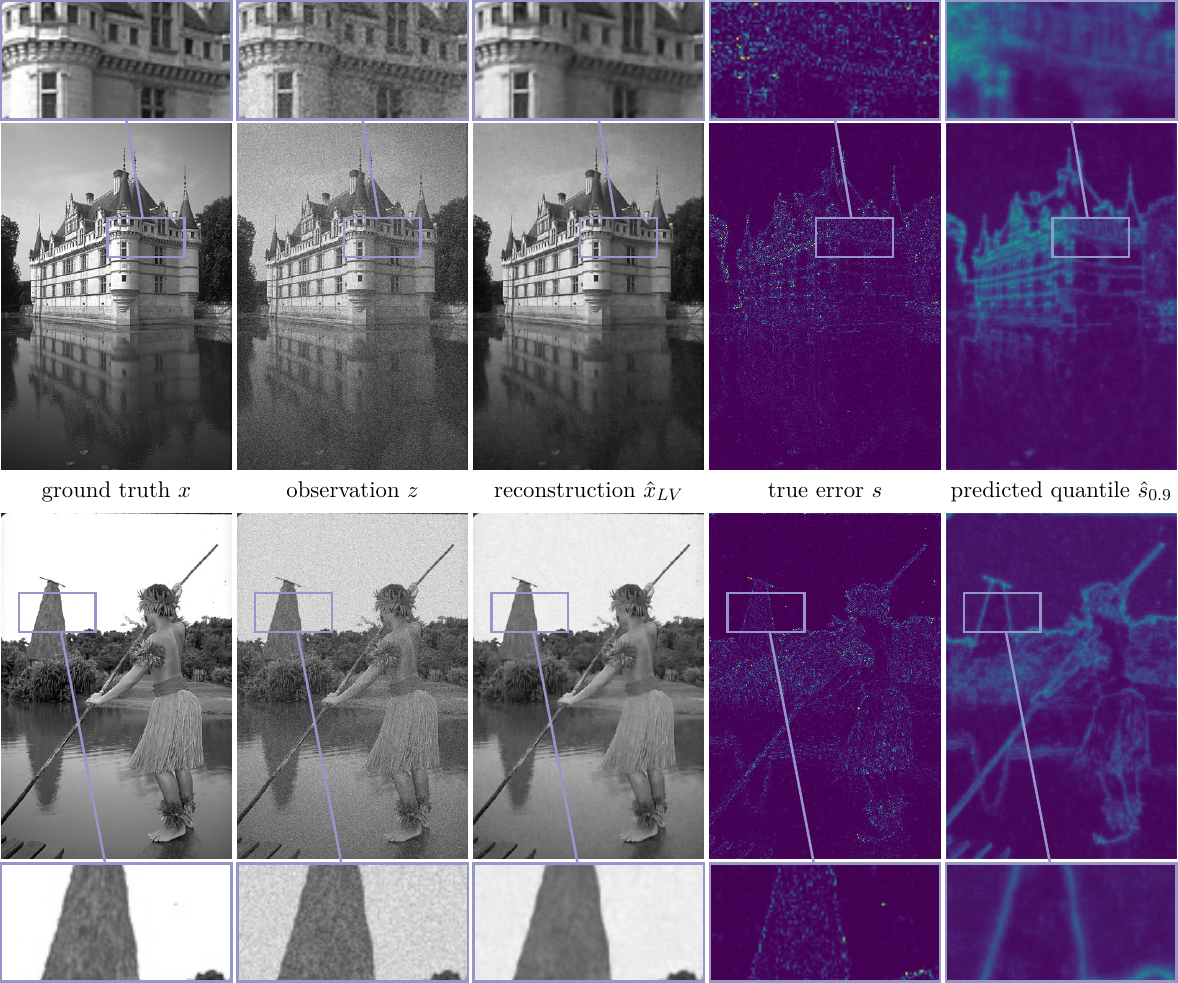}
}
\caption{FoE-$\ell_2$ Denoising. Reconstruction results for samples from the BSDS dataset and a noise level of $\sigma = 15/255$.
From left to right: ground truth image~$x$, corrupted image~$z$,  reconstruction~$\hat{x}_{LV}$ , true error~$s$, and predicted  $0.9$ quantile $\hat{s}_{0.9}$ ($0$ \protect\includegraphics[width=1.5cm,height=.2cm]{ures_viridis.png} $0.02$).}
\label{fig:rec_Foe_1e-4}
\end{figure}

\subsection{Total Deep Variation-$\ell_2$ Denoising}\label{sec:tdv_denoising}
The last experiment in the additive Gaussian denosing setting is performed using the Total Deep Variation (TDV)~\cite{koef21} regularizer.
As in the previous experiment, this regularizer is a non-convex data driven prior, however, orders of magnitude more powerful in terms of its approximation capability due to the greater amount of trainable parameters.
Formally, the prior is defined as $\prob(x) \propto \exp{\left( -\frac{TDV(x)}{\lambda}\right)}$ with
\[
    TDV(x) = \sum_{i=1}^{M}\sum_{j=1}^{N} \Psi(x)_{i,j},
\]
where $\Psi:\R^{M\times N}\to\R^{M\times N}$ is a U-Net inspired convolutional neural network. We refer the reader to~\cite{koef21} for a more detailed description of the TDV. As in the prevoius experiment, we use the ULA for sampling. Values of $\tau =\expnumber{1}{-3}$, $\lambda =\frac{1}{5.7}$ and $\beta = 150$ have empirically proven to yield satisfactory results.
Note that the natural choice of $\beta = \frac{1}{\sigma^2}$ was neglected, since $\beta$ was subject to optimization in the training of the TDV and must therefore be chosen accordingly.
As before, we use 50k samples of the respective Langevin algorithm. 
The data setup is the same as in the previous experiment. 
The estimated error-variance distribution is shown in \Cref{fig:TDV_estimated_distribution}, qualitative results in \Cref{fig:rec_TDV}, and quantitative ones again in \Cref{tab:quantquant}.

\begin{figure}[htb]
\centering
\includegraphics[width=0.7\linewidth]{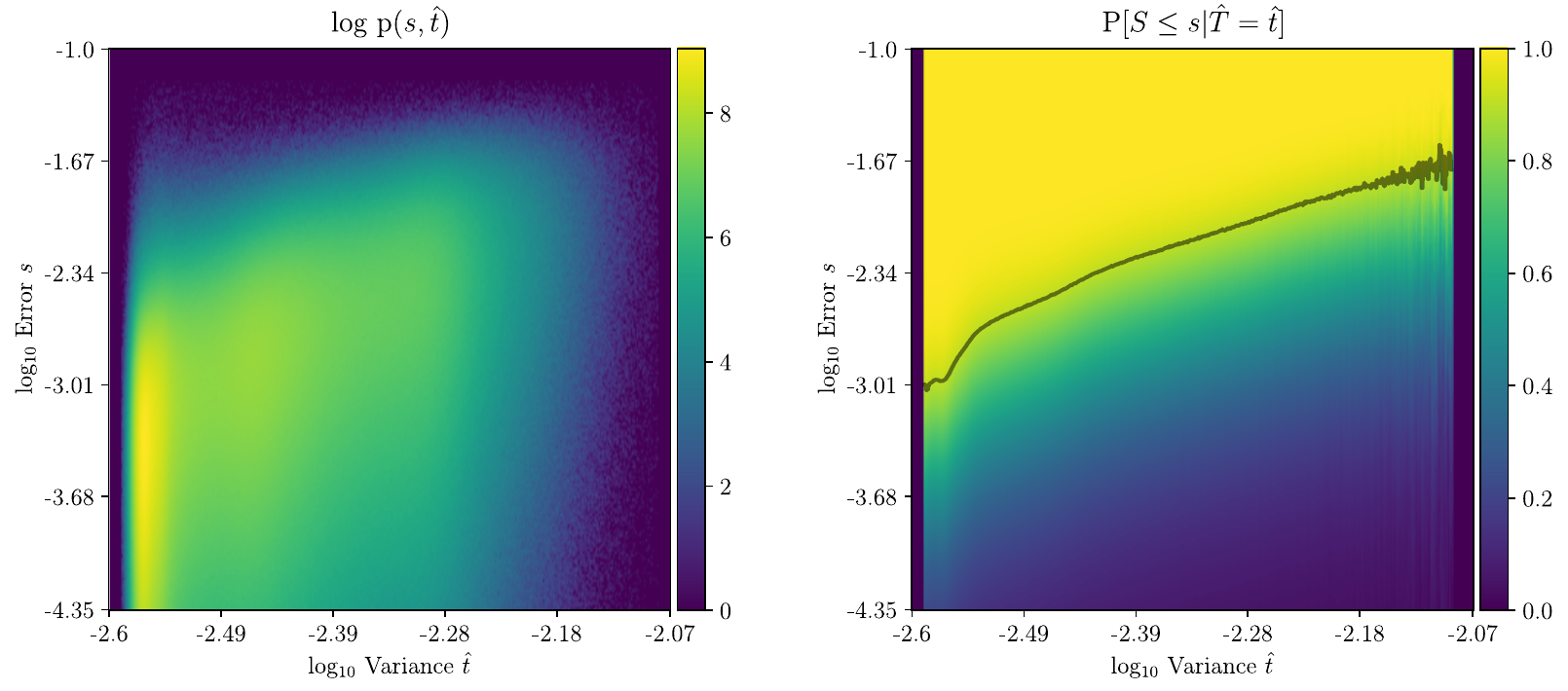}
\label{fig:TDV_estimated_distribution}
\caption{TDV-$\ell_2$ Denoising $\sigma = 15/255$. The left figure shows the joint log density of error S and estimated variance $\hat{T}$, while the right figure shows the conditional cumulative distribution for the error.
The black line indicates the conformalized 0.9 quantile.}
\end{figure}

\begin{figure}[htb]
\centering
\resizebox{0.8\textwidth}{!}{%
\includegraphics[scale=1]{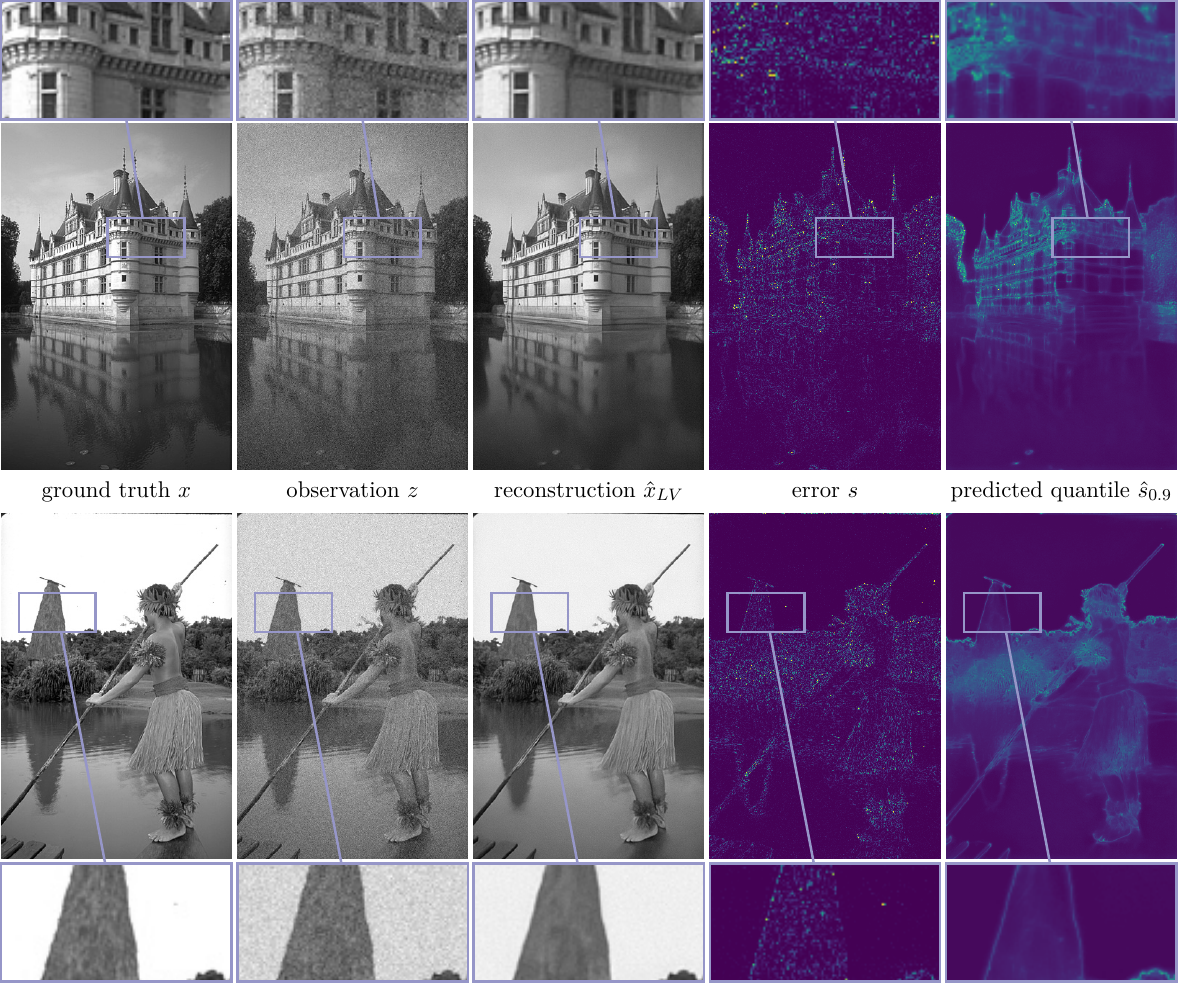}
}
\caption{TDV-$\ell_2$ Denoising. Reconstruction results for samples from the BSDS dataset and a noise level of $\sigma = 15/255$.
From left to right: ground truth image~$x$, corrupted image~$z$,  reconstruction~$\hat{x}_{LV}$ , true error~$s$, and predicted  $0.9$ quantile $\hat{s}_{0.9}$ ($0$ \protect\includegraphics[width=1.5cm,height=.2cm]{ures_viridis.png} $0.02$).}
\label{fig:rec_TDV}
\end{figure}

\subsection{MRI Reconstruction}\label{sec:mri_recon}

We now consider an inverse problem with a non-trivial forward operator and reuse the same TDV as in the previous section as a prior. The aim of accelerated MRI reconstruction is to recover an image $x \in \mathbb{R}^{M\times N}$ from a complex-valued observation $z \in \mathbb{C}^{M\times N}$ representing the undersampled k-space data. The relation of image and k-space data is given by the linear operator ${A:\mathbb{C}^{M\times N} \rightarrow\mathbb{C}^{M\times N}}$ composed of the two-dimensional discrete Fourier transform $\mathcal{F}:\C^{M\times N}\rightarrow \C^{M\times N}$ and an undersampling operator $M:\C^{M\times N}\rightarrow \C^{M\times N}$, together with additive measurement noise  $\nu$, formally expressed as
\begin{equation}
   z = Ax + \nu = M\mathcal{F}x + \nu.
   \label{eq:forward}
\end{equation}
Since the gradient step on the negative log-likelihood can be efficiently solved as proximal mapping, we use a proximal version of the ULA namely P-ULA~\cite{pe16} in this experiment. %
The proximal map of the negative log-likelihood reads as
\[
\prox_{\tau_k\mathcal{D}}(\Tilde{x})=\text{real}(\mathcal{F}^{-1}((\Id+\tau_k\beta M^\ast M)^{-1}(\mathcal{F}\Tilde{x}+\tau_k \beta M^\ast z))).
\]
Note that the TDV used in this setup is identical to the one used in the previous section and is therefore not specifically trained for the purpose of accelerated MRI reconstruction.
In contrast to the denoising experiments, the choice of the data-likelihood weighting is not clear, due to the linear operator.
As in \cite{narnhofer2021bayesian} we use a high value for the weighting parameter $\beta=\expnumber{1}{7}$ which almost leads to a projection on the acquired k-space lines due to the proximal map.
The regularization parameter was set to a value of $\lambda = \frac{1}{15}$.
Another difference to the denoising experiments is that we conduct a burn in phase of 500 iterations to acquire a steady state in this setup.
We used 420 central sclices from the fastmri knee multicoil training dataset \cite{zb18} with CORPD contrast to generate real valued groundtruth images from the root-sum-of-squares solution, which is further used to estimate the distribution of $(S, \hat{T})$.
Evaluation was performed on a test set of 100 images from the respective validation dataset. The estimated joint distribution of error and variance is shown in \Cref{fig:joint_distribution_mri}, qualitative results of the method in \Cref{fig:CORPD41}, and quantitative ones in \Cref{tab:quantquantMRI}.

\begin{figure}[htb]
\centering
\includegraphics[width=0.7\linewidth]{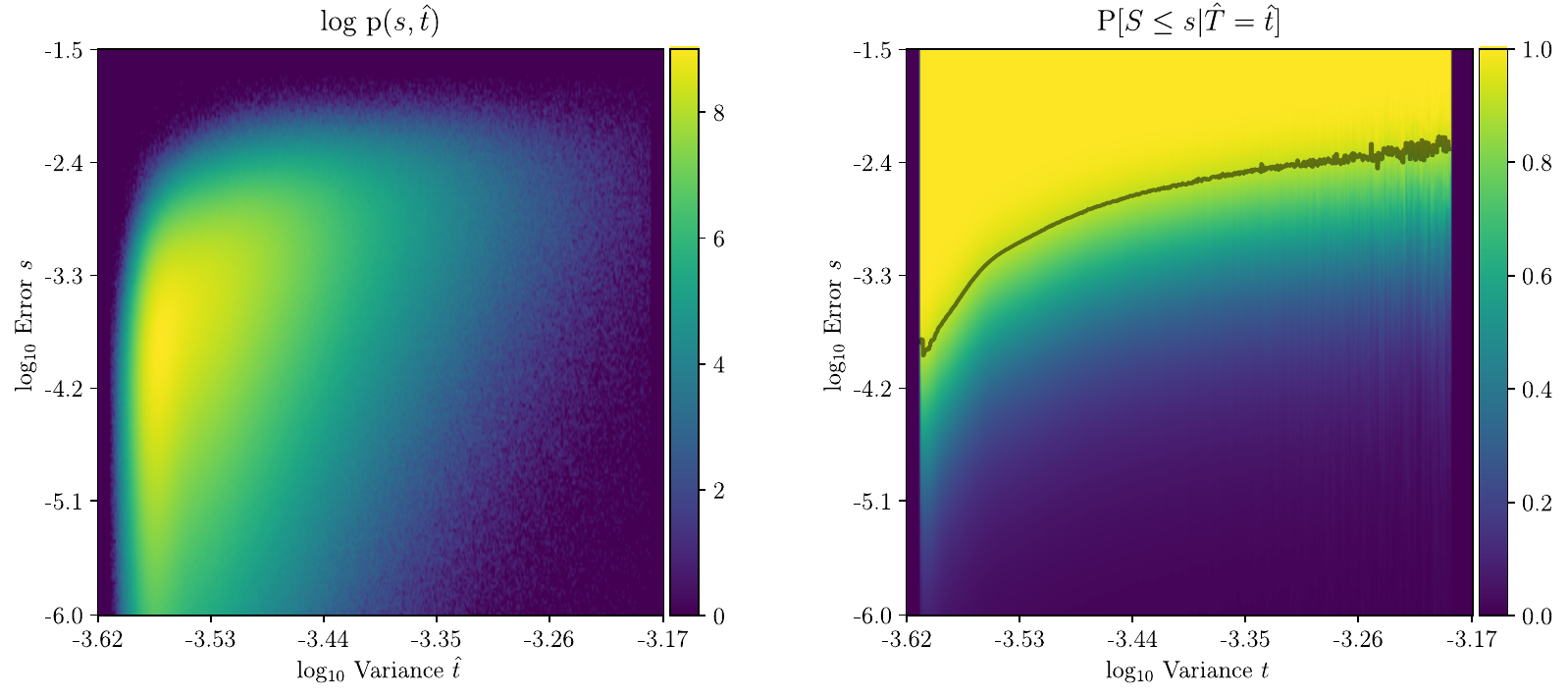}
\label{fig:joint_distribution_mri}
\caption{TDV-$\ell_2$ 4-fold MRI reconstruction. The left figure shows the joint log density of error S and estimated variance $\hat{T}$, while the right figure shows the conditional cumulative distribution for the error.
The black line indicates the conformalized 0.9 quantile.}
\end{figure}

\begin{figure}[htb]
\centering
\resizebox{0.8\textwidth}{!}{%
\includegraphics[scale=1]{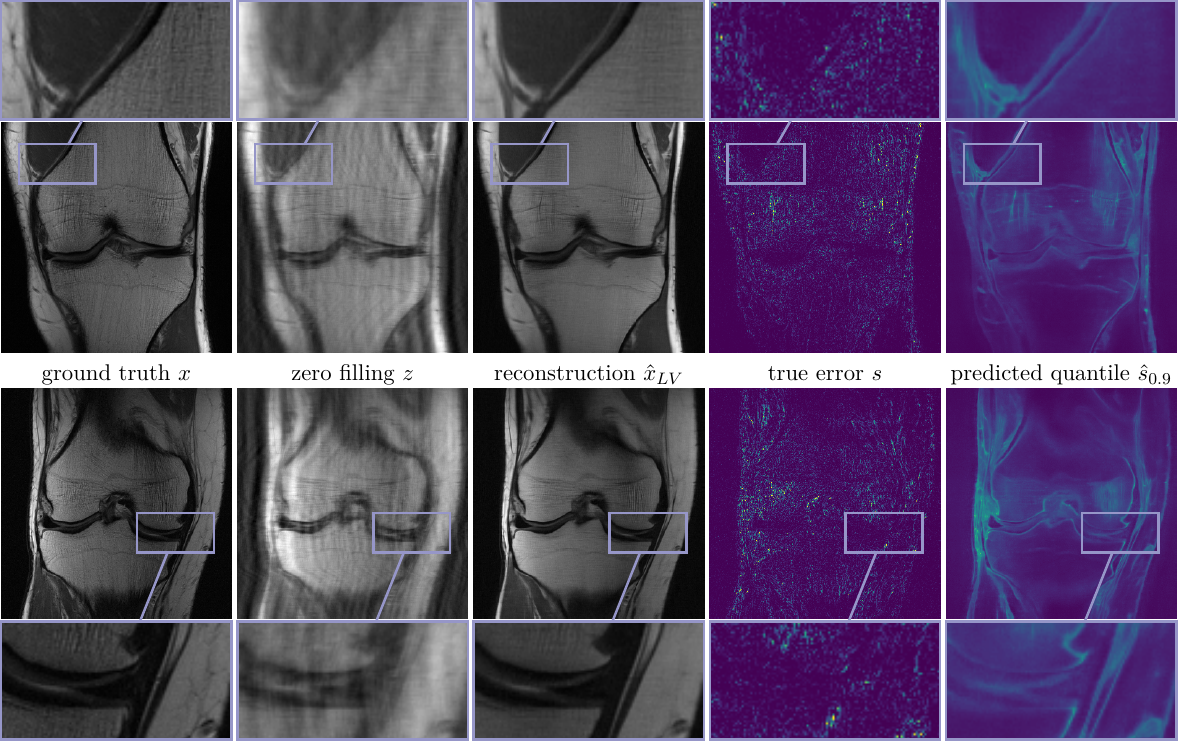}
}
\caption{MRI reconstruction results for CORPD data and $R=4$.
From left to right: ground truth image~$x$, zero filling~$z$,  reconstruction~$\hat{x}_{LV}$ , true error~$s$, and predicted  $0.9$ quantile $\hat{s}_{0.9}$~($0$~\protect\includegraphics[width=1.5cm,height=.2cm]{ures_viridis.png}~$0.06$)}
\label{fig:CORPD41}
\end{figure}

\subsection{Comparison to the State of the Art}\label{sec:comp}
In this section we compare the proposed method to an existing method for error estimation in imaging. While there are several methods available for uncertainty quantification, a fair comparison can only be made to other methods also estimating the prediction error.
\paragraph{The Comparison Method} Given a heuristic algorithm predicting pixel wise confidence intervals, in \cite{angelopoulos2022image} the authors propose a method of calibrating said algorithm in order to obtain guaranteed coverage. More precisely, assume we are given a heuristic predictor $\mathcal{T}:\Zc\rightarrow \R^{3\times N\times M}$, $\mathcal{T}(z)=(\hat{l}(z),\hat{x}(z),\hat{u}(z))$, where $\hat{x}$ is a point estimate of the ground truth image and $\hat{l}$, $\hat{u}$ denoting lower and upper bounds of the heuristic pixel-wise confidence intervals, satisfying $\hat{l}_{i,j}(z)<\hat{x}_{i,j}(z)<\hat{u}_{i,j}(z)$ for all pixels $(i,j)$. Define a scaled interval predictor for $\lambda>0$ as $\mathcal{T}_\lambda:\Zc\rightarrow \R^{3\times N\times M}$,
\begin{equation}
    \begin{aligned}
        &\mathcal{T}_\lambda(z)=(\hat{l}_\lambda(z),\hat{x}(z),\hat{u}_\lambda(z)),\\
        &\hat{l}_\lambda(z) = \hat{x}(z)+\lambda(\hat{l}(z)-\hat{x}(z)),\\
        &\hat{u}_\lambda(z) = \hat{x}(z)+\lambda(\hat{u}(z)-\hat{x}(z)).
    \end{aligned}
\end{equation}
Using a \emph{calibration} data set $(X_i,Z_i)_{i\in\Ic_\text{cal}}$, the parameter $\Lambda = \lambda((X_i,Z_i)_{i\in\Ic_\text{cal}})$ is chosen to ensure
\begin{equation}\label{eq:im2imuq}
\Prob\Bigg[\underbrace{\E\left[\frac{1}{NM}\left|\{(i,j)\;|\;X_{i,j}\in [l_\Lambda(Z)_{i,j},u_\Lambda(Z)_{i,j}]\}\right|\;\middle|\; (X_i,Z_i)_{i\in\Ic_\text{cal}}\right]}_\text{expected coverage conditioned on calibration data}\geq 1-\alpha\Bigg]\geq 1-\delta
\end{equation}
for $\alpha,\delta\in (0,1)$ defined arbitrarily by the user. That is, the method from \cite{angelopoulos2022image} ensures a bounded expected coverage on a new sample in probability over the calibration data. Two main differences to the proposed method are i) that in \cite{angelopoulos2022image} a method yielding a point estimate, a lower and an upper interval bound is needed already before the calibration, whereas for the proposed method a prior on the image data is sufficient, and ii) while the proposed method yields a direct probability on the coverage, in \eqref{eq:im2imuq} we obtain nesting of coverage and expectation. The benefit of the result in \eqref{eq:im2imuq} is that it allows for a prediction over the entire image, which comes at the cost of not knowing which pixels are the ones not correctly covered.
\paragraph{Experimental Setup} 
For the comparison we consider again TV-$\ell_2$ denoising as in \Cref{sec:tvl2_denoising}. We apply the method from \eqref{eq:im2imuq} with the heuristic interval predictor defined via 
\begin{equation}\label{eq:im2imheur}
\hat{l}(z) = \hat{x}(z)-\sqrt{\hat{t}(z)},\quad \hat{u}(z) = \hat{x}(z)+\sqrt{\hat{t}(z)}.
\end{equation}
with $\hat{t}(z)$ the estimated posterior variance. While in \eqref{eq:im2imuq} the authors present several different trained methods yielding heuristic interval predictors we decided to use \eqref{eq:im2imheur} since we think it allows for a reasonable comparison, focused on the calibration of the predictor ensuring coverage and rather than on the performance of the heuristic. Moreover, since the heuristic estimator would need to be trained on a distinct data set, additional sources of volatility with respect to the performance would be introduced via the training process. In order to obtain results comparable to our method, for the q-quantile we pick $\alpha$, $\delta$ performing a grid search minimizing the average estimated interval size while requiring that $(1-\alpha)(1-\delta)=q$ which is the most sensible choice according \eqref{eq:im2imuq} in order to obtain a coverage of approximately $q$ over all image pixels. The coverage results can be found in \Cref{tab:quant_comp} with a box plot comparing the interval sizes to the proposed method in \Cref{fig:boxcomp}. In this case the interval size of the proposed method is computed as $2\sqrt{\hat{s}_q}$, twice the estimated error, and the interval size of \cite{angelopoulos2022image} as $\hat{u}_\lambda-\hat{l}_\lambda$.

\begin{table}[htb]
\caption{Quantitative comparison of the proposed method and \cite{angelopoulos2022image} for BSDS 68 test data and the task of denoising with TV-$\ell_2$. Average coverage for different quantiles.}
\centering
\begin{tabular}{l|cc|cc|cc|cc}
{} &  \multicolumn{2}{c|}{Coverage}  &   \multicolumn{2}{c|}{Discrepancy to quantile in \% } \\
\noalign{\smallskip}
\textbf{Quantile} &{Reference} & {Ours} & {Reference} & {Ours}    \\  \hline\hline
     0.85          &  0.8973   &  0.8444  &   4.73  & 0.6\\
     0.90          &  0.9406   &  0.8951  &   4.06  & 0.49\\
     0.95          &  0.9805   &  0.9465  &   3.05  & 0.35\\\hline
\end{tabular}
\label{tab:quant_comp}
\end{table}

\begin{figure}[htb]
\label{fig:boxcomp}
\centering
\includegraphics[width=0.7\linewidth]{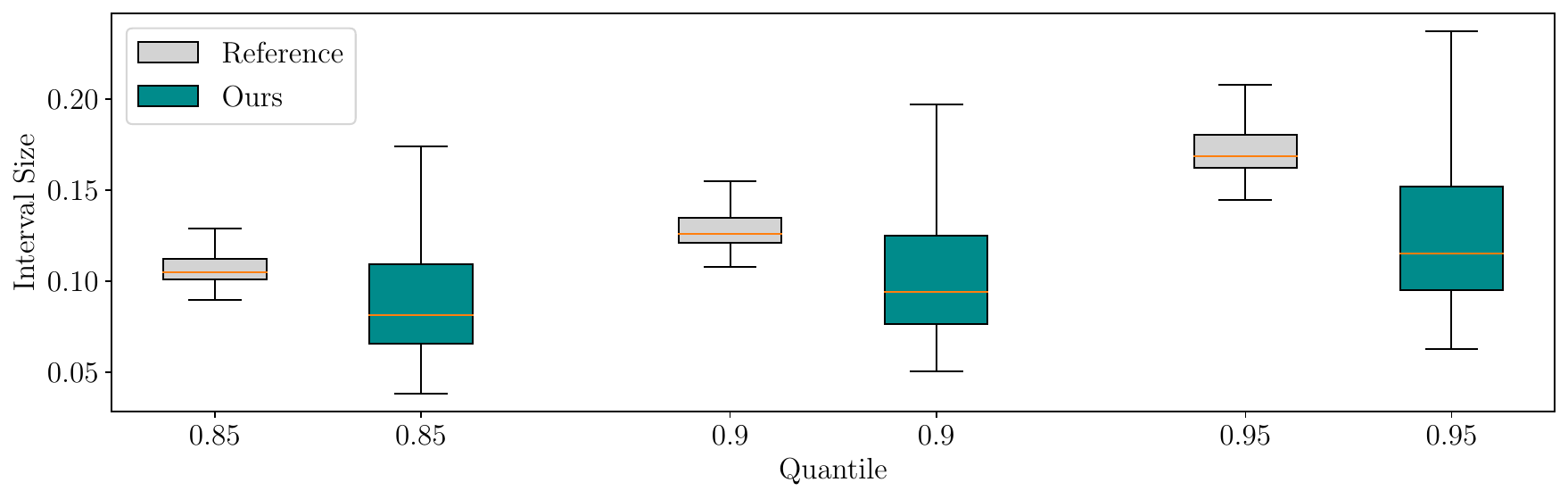}
\caption{Box plots of the estimated interval size on test data for different values of $q$. Comparison of the proposed method and the reference \cite{angelopoulos2022image}. Each individual box shows the respective estimated interval size, with the box representing the inter quantile range and the orange bar being the median.}
\end{figure}

\subsection{Evaluation of the Numerical Experiments}
In this section, we evaluate and discuss the results of the performed experiments. For this, we stress that the aim of the proposed work is not to obtain highly accurate reconstructions of the unknown ground truth, but rather to obtain tight bounds of the reconstruction error satisfying the desired coverage. First of all, note that quantitatively in all experiments coverage is obtained accurately, as can be seen in \Cref{tab:quantquant,tab:quantquantMRI}. The slight deviation from the predicted coverage despite the theoretical results only requiring a finite data set can be explained since, while our method only relies on a finite data set, the evaluation of probabilites, indeed, requires an infinite amount of data in order to be exact. Coverage, however, is guaranteed by the theoretical results for all methods anyway. Differences in performance should, thus, be measured rather in terms of the magnitude of the predicted error quantiles. In \Cref{fig:boxlog_denoising} we therefore show box plots of the estimated error quantiles for our denosing experiments. We can observe that all methods yield practically useful error estimates with magnitudes of the order $10^{-3}$ to $10^{-2}$ for images with gray scale values in $[0,1]$. Over all values of $q$, we can clearly note that the TDV prior yields the tightest estimates, followed by the FoE prior and lastly the TV prior. This trend is not surprising, as TDV is the most complex of the three models and TV clearly the simplest one. Let us focus now on a qualitative analysis of the results. In \Cref{fig:empirical_dist_TV_l2,fig:foe_estimated_distribution,fig:TDV_estimated_distribution,fig:joint_distribution_mri} we find that for all conducted experiments, the cumulative distribution of the reconstruction error conditioned on the posterior variance is mostly monotonically increasing, empirically supporting the predictive capability of the posterior variance for the error. Exceptions of monotonicity are observed only for extreme values of the variance in the form of oscillations and are most likely caused by a lack of data in these regions. Throughout all experiments, we observe that the estimated error quantiles structurally resemble the true reconstruction errors, see \Cref{fig:rec_1e-4,fig:rec_Foe_1e-4,fig:rec_TDV,fig:CORPD41}. In general, errors are concentrated at image edges which can be expected as most priors, especially TV type, penalize precisely image edges. The main differences between different priors are found in the heatmaps' sharpness around edges and overall smoothness and are a result of locality of the respective prior. The TV prior acts only on three neighbouring pixels resulting in sharp edges with the disadvantage of yielding noisy estimates with non-trivial error quantiles also in constant regions of the image, see \Cref{fig:rec_1e-4}. The FoE prior yields smoother error estimates where noise is only visible slightly in constant regions, \Cref{fig:rec_Foe_1e-4}. Again the best results are obtained with TDV, \Cref{fig:rec_TDV}, where error estimates are practically zero in constant regions of the image while still providing more detailed estimates than FoE (see the details of the castle) at the cost of some oversmoothing, see for instance the predicted error at the bush in the lower image. 
Furthermore, the reconstruction in the second row of \Cref{fig:rec_TDV} shows a pixel wit a high error which was not detected by our method, which can be explained by the fact that we only guarantee a 90$\%$ chance of a respective error being below the estimated quantile. 
The overall tendency of TDV yielding the best and FoE the second best results is also reflected in \Cref{tab:quantquant} in the respective values of the mutual information of $S$ and $\hat{T}$, which is a quantitative measure for the predictive capabilities of $\hat{T}$ wrt. $S$, see \Cref{sec:MI}. We observe a significant increase from TV to FoE and a further, smaller increase to TDV. For the sake of completeness we also added the PSNR and SSIM values in the tables. Regarding a comparison to the state of the art, our method outperforms the one proposed in \cite{angelopoulos2022image} in terms of interval sizes as shown in \Cref{fig:boxcomp}. Among all considered quantiles, the interval sizes with the proposed method tend to be statistically smaller with tighter coverage as shown in \Cref{tab:quant_comp}. It should, however, be mentioned that the method from \cite{angelopoulos2022image} might yield improved results using a more sophisticated heuristic interval predictor as explained in \Cref{sec:comp}. Moreover, coverage on a per pixel base cannot be prescribed for the method in \cite{angelopoulos2022image} and the applied choice of $\alpha,\delta$ from \eqref{eq:im2imuq} satisfying $(1-\alpha)(1-\delta)=q$ might not be optimal.

\begin{table}[htb]
\caption{Quantitative Results for different regularization approaches for denoising with $\sigma=15/255$ on the BSDS 68 dataset. Left: PSNR, SSIM, and mutual information (\Cref{sec:MI}) of $(S;T)$ for different experiments. Right: Coverage for different estimated quantiles and experiments.}
\centering
\begin{tabular}{l|c|c|c|c||c|c|c|c}
\noalign{\smallskip}
\hline
 \backslashbox[25mm]{\footnotesize{\textbf{Method}}}{\footnotesize{\textbf{Metric}}} & PSNR  & SSIM  & I(S;T)  & \backslashbox[25mm]{\footnotesize{\textbf{Method}}}{\footnotesize{\textbf{Quantile}}} &    0.85      &    0.9    &    0.95   & 0.99   \\  \hline\hline
 
     Corrupted    &   24.78     &    0.5820       &      -     & & & & & \\
     TV-$\ell_2$  &   29.34     &    0.7913       &    0.1231   & TV-$\ell_2$   &    0.8444  &    0.8951    &    0.9465 &    0.9888    \\
     FoE-$\ell_2$ &   30.29     &    0.8370       &    0.1609   & FoE-$\ell_2$  &    0.8447  &    0.8948    &    0.9459 &    0.9883    \\
     TDV-$\ell_2$ &   30.79     &    0.8484       &    0.1650   & TDV-$\ell_2$  &    0.8527  &    0.9023    &    0.9516 &    0.9906    \\\hline
\end{tabular}
\label{tab:quantquant}
\end{table}

\begin{figure}[htb]
\label{fig:boxlog_denoising}
\includegraphics[width=\linewidth]{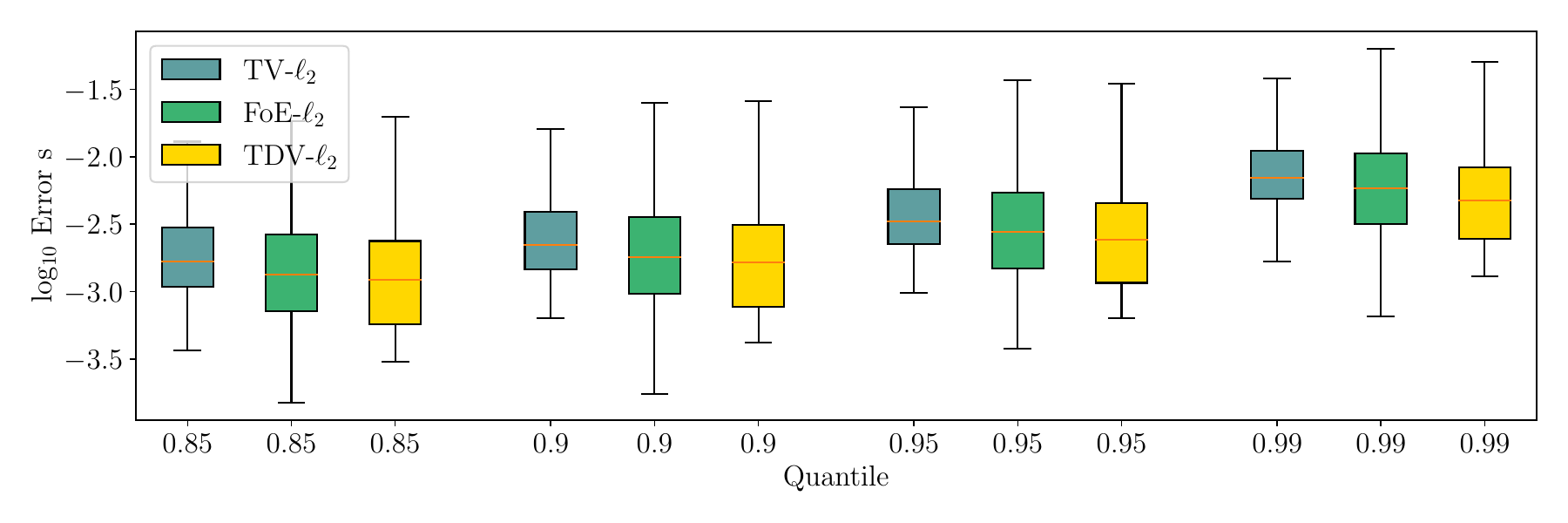}
\caption{Box plots of the estimated error quantiles for denoising on test data for different values of $q$ and different priors. Each individual box shows the respective distribution of errors, with the box representing the inter quantile range and the orange bar being the median.}
\end{figure}

\begin{table}[htb]
\caption{Quantitative Results for 4-fold accelerated MRI reconstruction with the TDV on the fastmri validation dataset.}
\centering
\begin{tabular}{l|c|c|c|c||c|c|c|c}
\noalign{\smallskip}
\hline
 \backslashbox[25mm]{\footnotesize{\textbf{Method}}}{\footnotesize{\textbf{Metric}}} & PSNR  & SSIM  & I(S;T) & \backslashbox[25mm]{\footnotesize{\textbf{Method}}}{\footnotesize{\textbf{Quantile}}} &    0.85      &    0.9    &    0.95   & 0.99 \\  \hline\hline 
     Corrupted    &   10.66     &    0.0145     &      -       &&&&&\\
     TDV-$\ell_2$ &   34.79     &    0.8812      &    0.1554    & TDV-$\ell_2$  &    0.8461  &    0.8966    &    0.9478 &    0.9895      \\\hline
\end{tabular}
\label{tab:quantquantMRI}
\end{table}

\section{Conclusions}
\label{sec:conclusions}

In this work we propose a general framework for error estimation for inverse problems in imaging that is flexible with respect to the forward operator as well as the chosen reconstruction approach as long as posterior sampling is possible. Given any point estimate for the inverse problem and an i.i.d. sample, the method enables to estimate pixel-wise quantiles of the reconstruction error for an unseen sample. Coverage of the estimated error quantiles is guaranteed without assumptions on the underlying distributions. In various experiments we show that the proposed method works accurately for different regularization approaches as well as inverse problems. Accurate knowledge about possible reconstruction errors might be especially important for critical tasks such as medical imaging where medical experts base the patients treatment on reconstructed images and therefore have an option to revise diagnoses based on structures that clearly show a high error potential. In future work the mutual information $I(S;T)$ between posterior variance and error might become an interesting metric to quantify the predictive capability of the posterior variance for the reconstruction error of a specific method. Furthermore future approaches could consider a different choice of statistical entities, such as substituting the MMSE by the median and the variance by the median absolute deviation and canonically the mean squared error by the mean absolute difference.

\paragraph{Limitations}
From a theoretical point of view, the presented results guarantee the desired coverage for each pixel individually. However, they do not directly carry over to an assertion regarding coverage of all pixels in an image. In order to circumvent this issue, in \cite{angelopoulos2022image} the authors bound the expected value of the rate of correctly covered pixels in an image using methods of risk control. A side effect of this approach, however, is that it is unknown which pixels are correctly covered and which are not. Requiring all pixels of an image to be covered correctly, on the other hand, was observed to lead to inefficient quantile estimates in our experiments, which is why we decided to use the pixel wise approach. Moreover, from a practical point of view the amount of data is a limiting factor. For numerical experiments we used all pixels of an image for estimation despite the fact that in this setting, the sample might not be i.i.d. While theoretically not favourable, this heuristic did not show to be disadvantageous in empirical experiments.

\appendix

\section{Analytical Computation of $\prob(s,t)$ for the 1D example}
\label{sec:appendixx_change_of_variables}
Let $X\in\Xc$ be a random vector with density $\prob_X$ and $Y=f(X)$ with a diffeomorphism $f:\Xc\rightarrow\Yc$. Then via a change of variables the density of $Y$ is computed as 
\[\prob_Y(y) = \prob_X(f^{-1}(y)) |\det(Df^{-1}(y))| \]
where $Df^{-1}$ denotes the Jacobian of $f^{-1}$. The computation can be generalized also in the case that $f$ is not injective. Assume $\Xc = \Xc_1\dot\bigcup \dots \dot\bigcup\Xc_n$ and $f$, $f(x)=g_i(x)$ if $x\in \Xc_i$ with $g_i:\Xc_i\rightarrow g(\Xc_i)\subset \Yc$ a diffeomorphism for all $i$. That is, $f$ might not be injective, but it is possible to partition its domain into subsets, such that $f$ is injective on each subset. In this case we find

\begin{equation}
    \begin{aligned}
        \Prob[f(X)\in A] = \int\limits_{f^{-1}(A)}\prob_X(x) \;\text{d}x \\
        = \sum\limits_{i=1}^n \int\limits_{f^{-1}(A)\cap X_i}\prob_X(x)\; \text{d}x \\
        = \sum\limits_{i=1}^n \int\limits_{A\cap g_i(\Xc_i)}\prob_X(g_i^{-1}(y))|\det(Dg_i^{-1}(y))| \;\text{d}y.
    \end{aligned}
\end{equation}
Thus, we find 
\[\prob_Y(y)=\sum\limits_{i:\;g_i^{-1}(\{y\})\neq \emptyset }\prob_X(g_i^{-1}(y))\;|\det(Dg_i^{-1}(y))|\]
In the setting at hand, the random vector $X$ is $(X,Z)$ and the transformation $f$ reads as 
\begin{align*}
\E[X|Z=z] &= \int x\prob(x|z)\;\text{d}x,\\
f_1(x,z) &= s(x,z) = (x-\E[X|Z=z])^2,\\
f_2(x,z) &= t(z) = \int (y-\E[X|Z=z])^2\prob(y|z)\;\text{d}y.\\
\end{align*}
The practical implementation of this procedure is illustrated in \Cref{fig:illustration_change_of:variables}. For given $(s',t')$, we first compute all values of $(x,z)$, such that $f(x,z)=(s',t')$, that is $f^{-1}(\{(s',t')\})$. In practice this is done by intersecting the corresponding level sets. For instance in the bottom, center plot in \Cref{fig:illustration_change_of:variables} the green lines denote the level set $L_s=\{(x,z)\;|\;s(x,z)=-1.75\}$, and the blue lines the level set $L_t=\{(x,z)\;|\;t(x,z)=-1\}$. Locally at every element $(x,z)\in L_s\cap L_t$, $f$ is invertible and we need the value of the determinant of its local inverse in order to compute $\prob_{(S,T)}(s',t')$. So let $(x',z')\in L_s\cap L_t$. The inverse $z(t)$ is difficult to compute and we thus linearly approximate it locally as
\begin{equation}
\hat{z}(t) = z_0+(t_1-t_0)\frac{z_1-z_0}{t_1-t_0}
\end{equation}
for $t\in [t_0,t_1]$ and $z_i = z_i(t_i)$. Afterwards, from $f_1(x,z) = s(x,z) = (x-\E[X\mid Z=z])^2$ we locally approximate $x(s,t)$ by
\begin{equation}
\hat{x}(s,t) = \E[X\mid Z=\hat{z}(t)] \pm \sqrt{s}
\end{equation}
with either $+\sqrt{s}$ if $x'>\E[X\mid Z=z']$ or $-\sqrt{s}$ otherwise. This immediately yields for the local inverse

\begin{align*}
\left| det\left( \begin{array}{cc} \frac{\partial x(s, t)}{\partial s} & \frac{\partial x(s, t)}{\partial t} \\ \frac{\partial z(s, t)}{\partial s} & \frac{\partial z(s, t)}{\partial t} \end{array} \right)\right| \approx \left|\frac{1}{2\sqrt{s}}\frac{z_1-z_0}{t_1-t_0}\right|.
\end{align*}
Finally, the density $\prob_{(S,T)}(s',t')$ is computed a
\[\prob_{(S,T)}(s',t')=\sum\limits_{(x',z')\in L_s\cap L_t}\prob_{(X,Z)}(x',z')\left|\frac{1}{2\sqrt{s'}}\frac{z_1-z_0}{t_1-t_0}\right|\]
with $z_i,t_i$ always chosen according to $z'$.

\begin{figure}[htb]
\label{fig:lines}
\includegraphics[width=\linewidth]{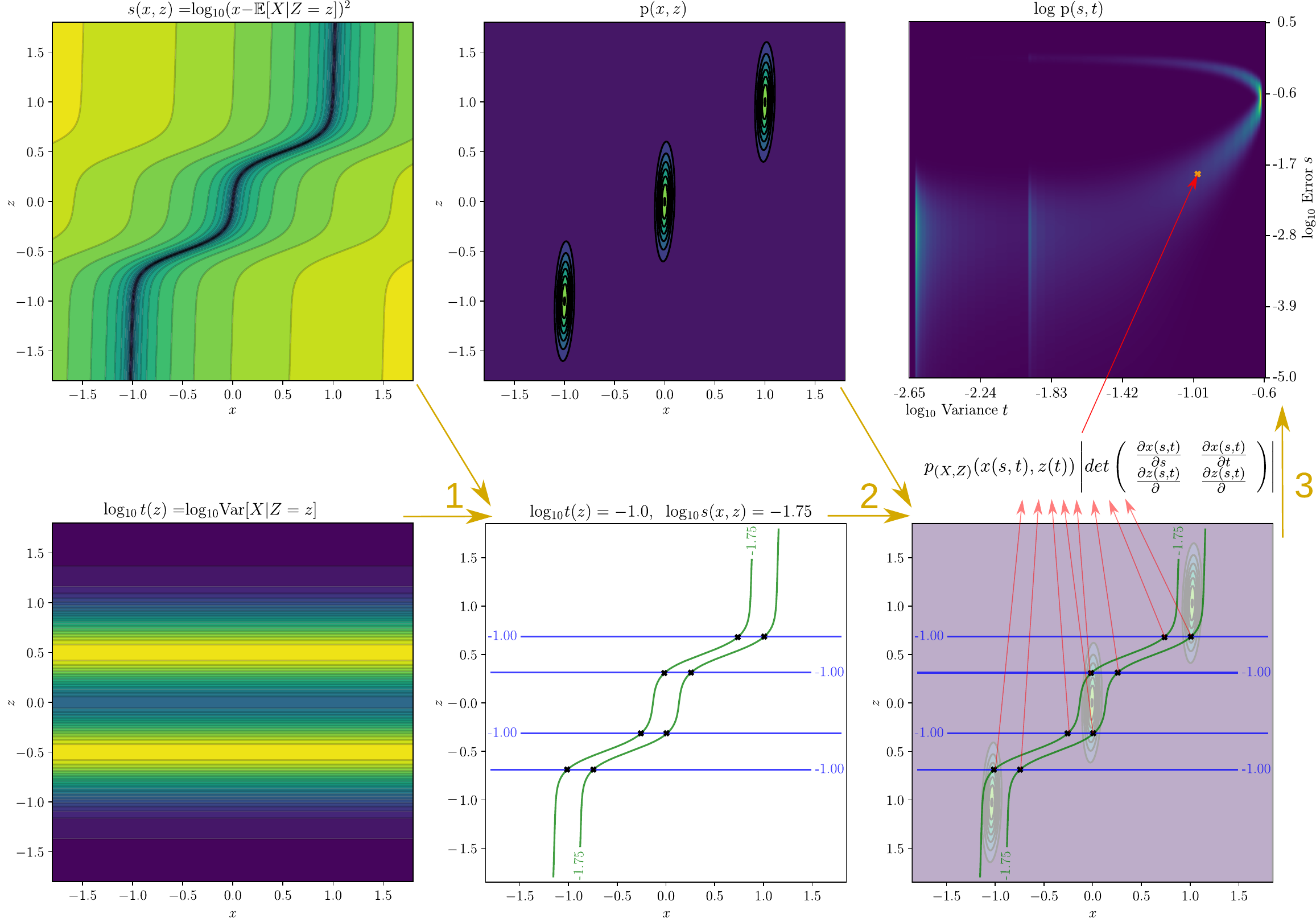}
\caption{Illustration of the methodology for analytically calculating the distribution $\prob(s,t)$. Step 1: Find level sets of $(s,t)$. Step 2: Find pre-image of $(s,t)$ and evaluate density. Step 3: Locally approximate inverse transformation, multiply the density with with the Jacobi determinant before summation.}
\label{fig:illustration_change_of:variables}
\end{figure}

\section{Statistical Independence of Neighbouring Pixels}\label{sec:neighbouring_pixels}
Estimating the distribution of a random variable from a finite sample in general requires the sample to be i.i.d. Thus, in order to compute the distribution of $(S,\hat{T})$ in theory we are not allowed to use the data of all pixels of an image, since i) neighbouring pixels are in general not independent of each other  and ii) if the prior or the forward operator acts globally, then two samples $X_1$, $X_2$ corresponding to two pixels of the same image will have the same observation $Z_1=Z_2$, namely the entire corrupted image. A remedy would be to consider each pixel position separately, that is, in order to estimate the reconstruction error of pixel $(i,j)$, we would only use exactly pixel $(i,j)$ of each image of the given sample. This approach, however, comprises a limitation for the approximation accuracy especially if the available amount of data is small. Fortunately, empirical evidence suggests that using all pixels of an image to compute a single joint density does not negatively impact the results. To show this we compare the results obtained from estimating the empirical distributions of $(S,\hat{T})$ and quantiles for each pixel position separately with the ones obtained from a joint estimation for TV-$\ell_2$ denoising. The setting at hand is identical to the experiment described in \Cref{sec:tvl2_denoising} except for the change of data which is required since this experiment needs to be performed with a vast amount of data.
We use $256\times256$ center crops of 35k Imagenet samples \cite{de09} for this experiment. Then, in \Cref{tab:quant_neighb_pixels}, we compare the obtained coverage from using the pixel-separate estimation with the one from using one empirical distribution estimated from all pixels jointly for our method. 
Discrepancies of the obtained coverage are of the order of hundredths of a percent and, thus, negligible. Therefore, we decided to use all pixels of each image for the estimation of the empirical distribution for our experiments. Qualitative examples of reconstructed Imagenet samples can be seen in \Cref{fig:ImagenetAux}.

\begin{table}[htb]
\caption{Quantitative comparison of pixel joint and separate estimation of the error-variance distribution for 256x256 Imagenet test data. Average coverage for different quantiles.}
\centering
\begin{tabular}{l|cc|cc}
{}&{Pixel separate} & {Pixel joint}  \\
\noalign{\smallskip}
 \textbf{Quantile} &  coverage &  coverage  &   discrepancy in \% \\  \hline\hline
     0.85          &  0.8512   &  0.8504  &   0.08\\
     0.90          &  0.9007   &  0.9029  &   0.22\\
     0.95          &  0.9499   &  0.9492  &   0.07\\
     0.99          &  0.9878	&  0.9875  &   0.03\\\hline
\end{tabular}
\label{tab:quant_neighb_pixels}
\end{table}

\section{Mutual Information of $s$ and $t$}\label{sec:MI}
The mutual information provides a means of quantifying the dependence of two random variables. More specifically, it quantifies how much information about one random variable is gained by observing the other one. The mutual information $\operatorname{I}$ of $S$ and $\hat{T}$ is defined as
\[
  \operatorname{I}(S;\hat{T}) = 
  \int \int
      {\prob(s,\hat{t}) \log{ \left(\frac{\prob(s,\hat{t})}{\prob(s)\,\prob(\hat{t})} \right) }
  } \; ds \,d\hat{t}.
\]

In our case, it can be interpreted as the reduction in uncertainty about the error $S$ observing the variance $T$ and can easily be computed by numerical integration using the estimated joint density $\prob(s,\hat{t})$.
Note that due to the mutual information being invariant under any smooth and uniquely invertible transformation of the variables~\cite{kr04}, we can directly compute it in log-space.

\section{Effect of Thinning}\label{sec:appendix_thinning}
Thinning is usually used in Markov Chain based sampling processes to eliminate the statistical dependence of consecutive samples. To be precise, approximating, e.g., the MMSE from a sample requires said sample to be i.i.d. Consecutive iterates of the Langevin algorithm are, however, not independent by nature. Given the output sequence $(x^k)_k$ of a Langevin algorithm, sufficiently far apart samples $x^{k}, x^{k+H}$ for large enough $H$ can be considered independent, which is why it is in general recommended to use $(x^{kH})_k$ for further computations.
This, however, requires an increase of anyway high computational costs caused by sampling in high dimensions. Thus, we investigate the actual empirical impact of thinning in the following, with the aim of justifying to use $H=1$ in our experiments. We compute the MMSE and posterior variance for different values of the thinning parameter $H$ fot TV-$\ell_2$ denoising. The mean absolute difference to the results obtained via the BP algorithm from \Cref{sec:markov_random_fields} over the number of samples used for the estimation can be seen in \Cref{fig:thin}, top row. One can observe that, indeed, any thinning accelerates convergence with respect to the number of samples. A given number of samples, however, requires the $H$-fold number of iterations of the Langevin algorithm if a thinning of $H$ is used. Hence, in the second and third row, we plot the deviations over the corresponding Langevin iterations and do not find a clear empirical advantage of thinning. For a thinning of $H=5$ or $H=10$, the curves are identical, for a thinning of $H=100$, convergence is even slowed down. For the sake of computation time, therefore, we decided to use $H=1$ in our experiments.

\begin{figure}[htb]
\label{fig:thin}
\includegraphics[width=\linewidth]{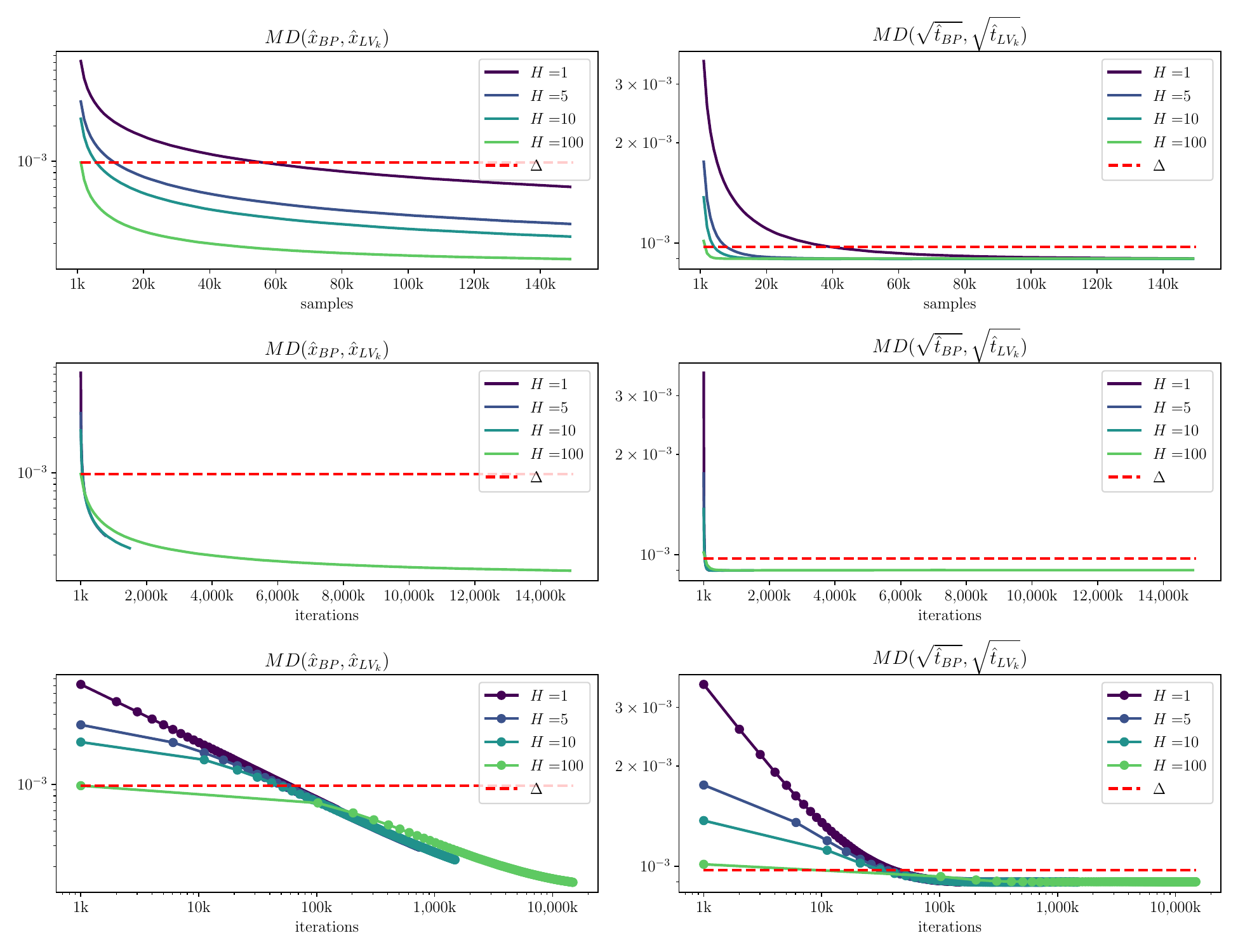}
\caption{The effect of thinning is shown over samples (first row) for deviation of MMSE $\hat{x}$ respectively posterior variance $\hat{t}$ from Langevin to BP. Bottom two rows show the effect over iterations respectively iterations in logarithmic representation. The discretization threshold is set to $\Delta=\frac{1}{1024}$.}
\end{figure}

\section{Additional Qualitative Plots for Denoising and MRI Reconstruction Experiments}\label{sec:appendix_qual_res}
In the following we show some more qualitative results obtained with the presented experiments. In \Cref{fig:margs} we show a qualitative comparison between the pixel marginals obtained with ULPDA sampling and the BP algorithm. In \Cref{fig:ImagenetAux} we show denoising results for Imagenet data from the experiment described in \Cref{sec:neighbouring_pixels}, \Cref{fig:appendix_qual_den2} shows additional results for denoising from BSDS data and in \Cref{fig:appendix_mri} additional MRI reconstruction results are depicted.

\begin{figure}[htb]
\centering
\includegraphics[width=\linewidth]{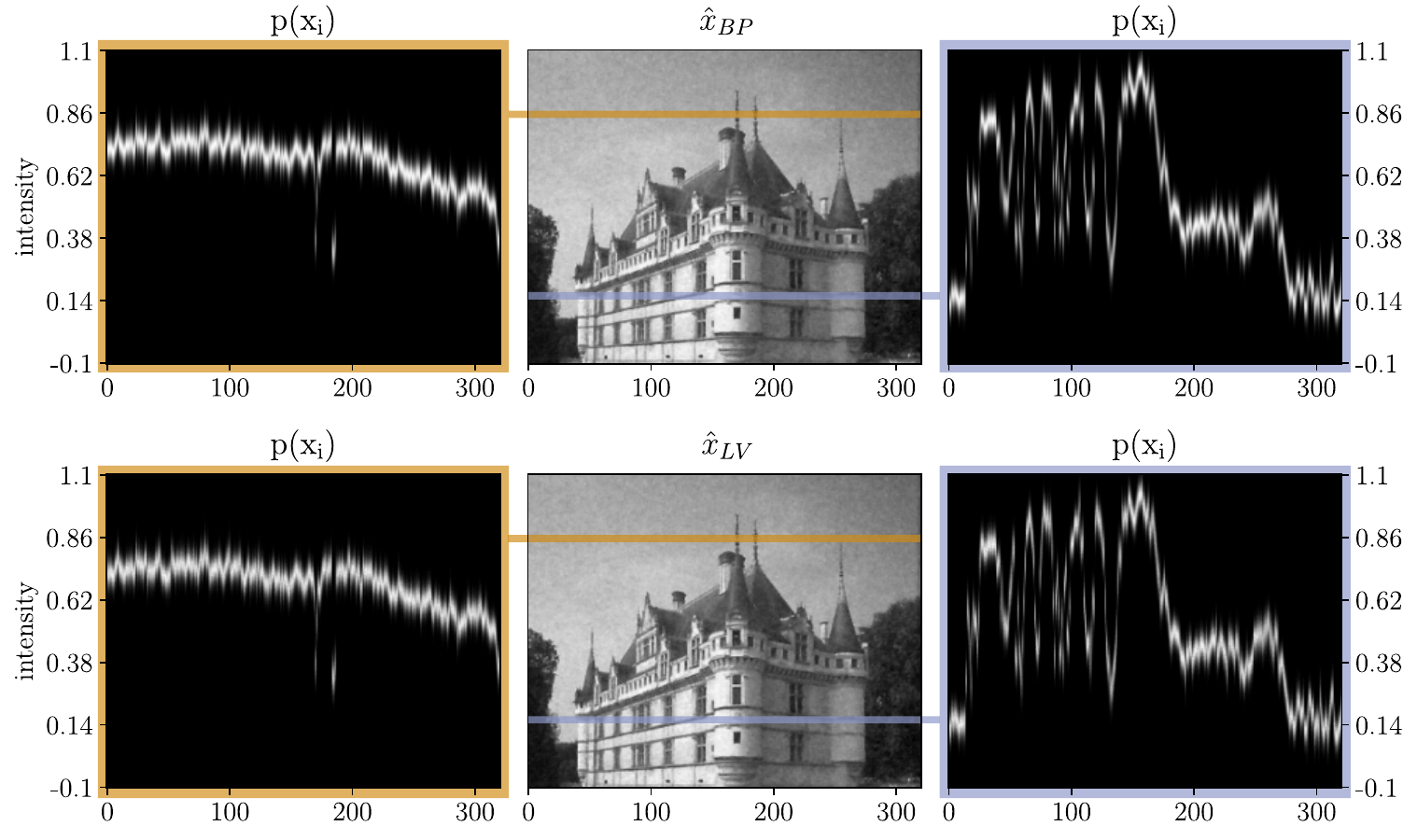}
\caption{Visualization of pixel marginals (left and right) for highlighted lines and the associated MMSE (middle). The upper row shows BP results, the lower row shows ULPDA results.}\label{fig:margs}
\end{figure}

\begin{figure}[htb]
\includegraphics[width=\linewidth]{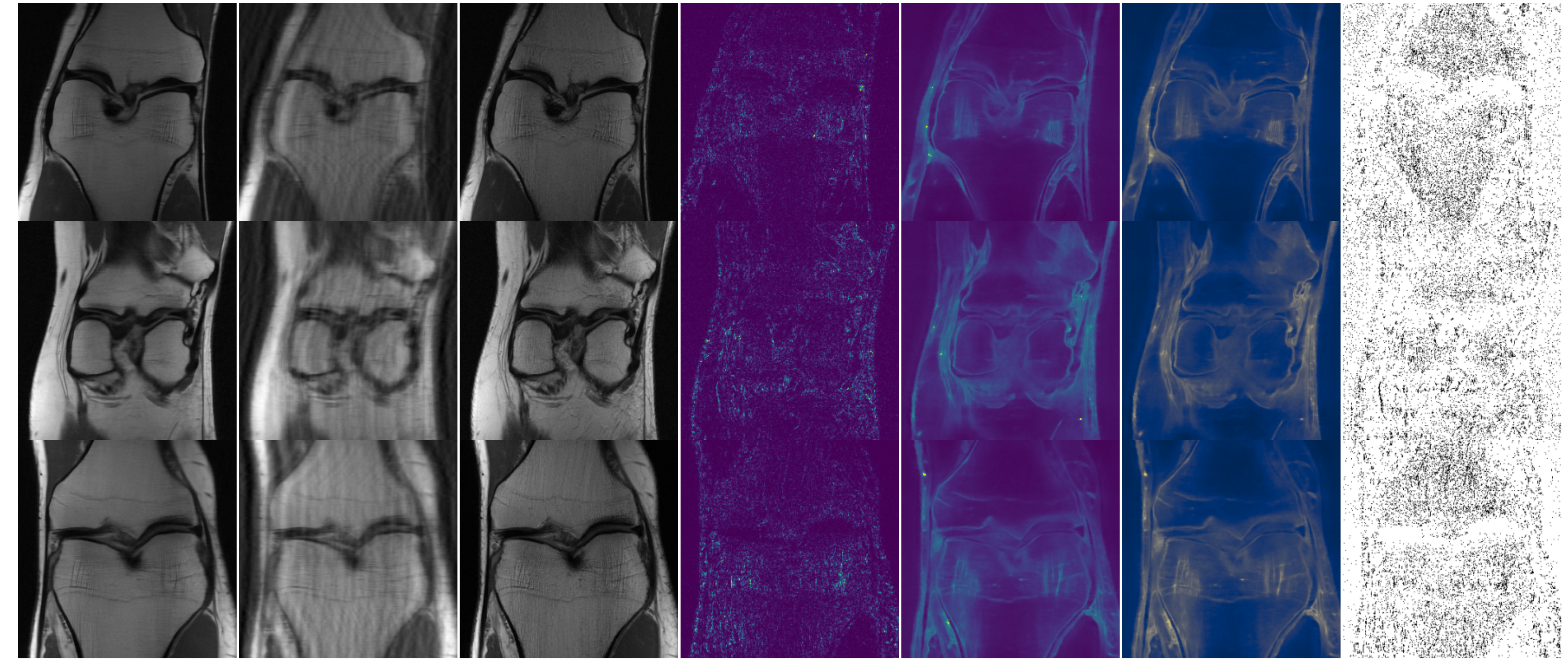}
\caption{MRI reconstruction with TDV prior. Various samples from fastmri dataset. Columns showing from left to right: reconstruction~$\hat{x}$, corrupted image~$z$, ground truth image~$y$, true error~$s$, predicted  $0.9$ quantile $\hat{s}_{0.9}$ ($0$ \protect\includegraphics[width=1.5cm,height=.2cm]{ures_viridis.png} $0.06$), estimated posterior variance $\hat{t}$ ($\expnumber{2.5}{-4}$~\protect\includegraphics[width=1.5cm,height=.2cm]{ures_cividis.png}~$\expnumber{5.6}{-4}$) and pixelwise succes (white if error below quantile, black if above).}
\label{fig:appendix_mri}
\end{figure}

\begin{figure}[htb]
\includegraphics[width=\linewidth]{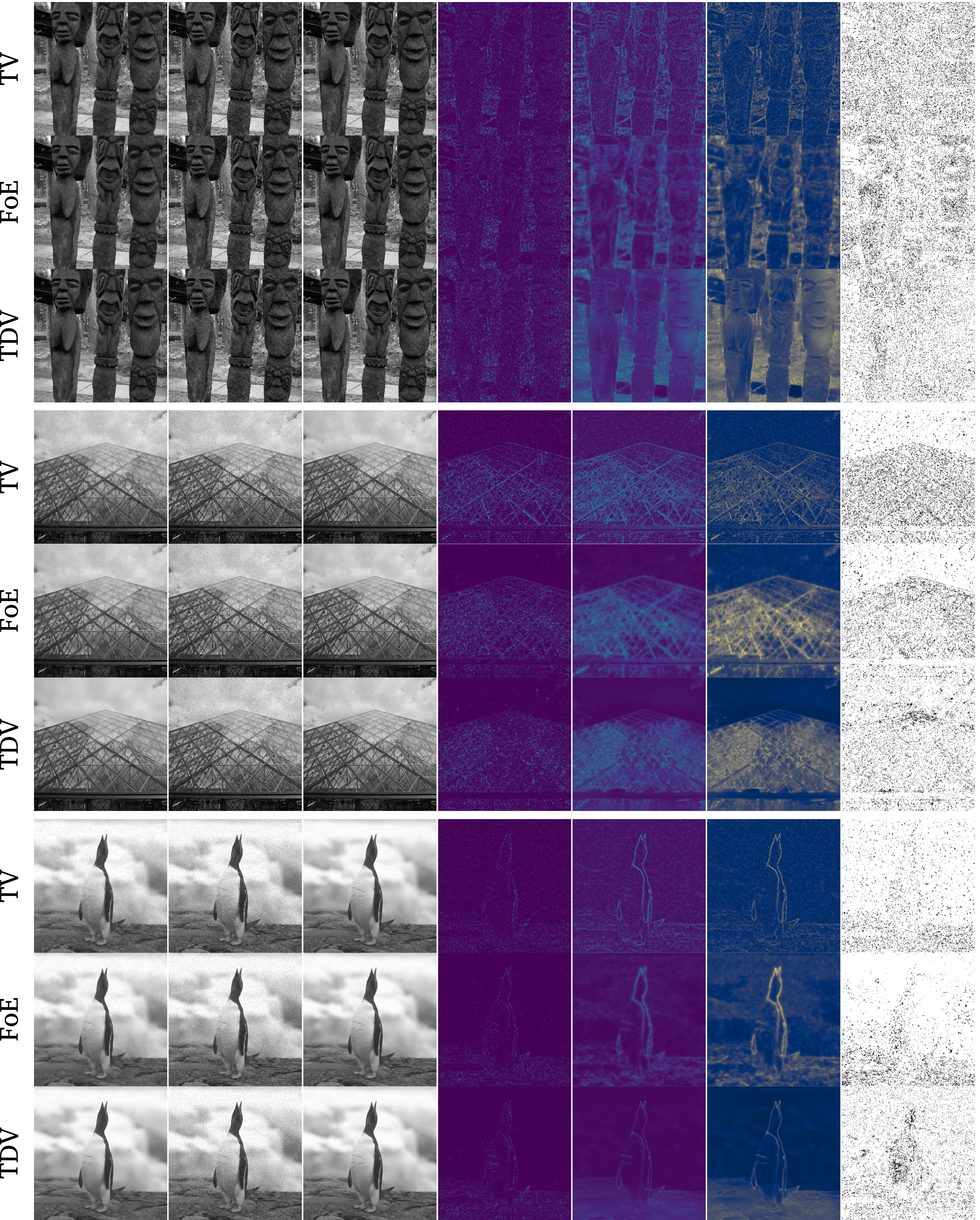}
\caption{Denoising. Various samples from BSDS68 for different priors. 
Columns showing from left to right: reconstruction~$\hat{x}$, corrupted image~$z$, ground truth image~$x$, true error~$s$, predicted  $0.9$ quantile $\hat{s}_{0.9}$ ($0$ \protect\includegraphics[width=1.5cm,height=.2cm]{ures_viridis.png} $0.03$), estimated posterior variance $\hat{t}$ and pixelwise succes (white if error below quantile, black if above).
Note that for the sake of contrast we omit colorbars for the variance plots in this figure.}
\label{fig:appendix_qual_den2}
\end{figure}

\begin{figure}[htb]
\includegraphics[width=\linewidth]{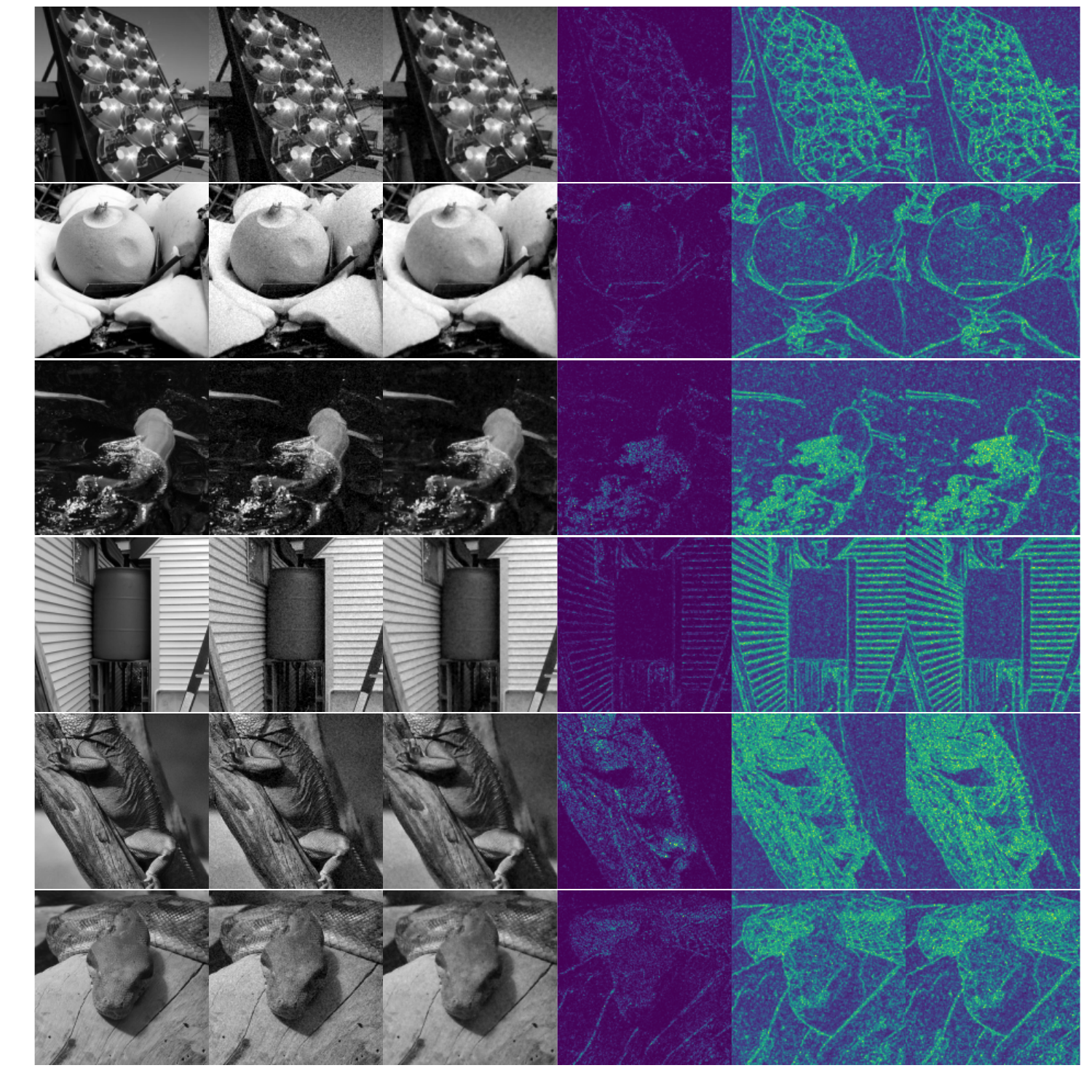}
\caption{TV-$\ell_2$ Denoising for various samples from Imagenet. From left to right: ground truth x, observation z, reconstruction  $\hat{x}$, true error s, predicted quantile $\hat{s}_{0.9}$ of the joint and pixelwise approach as described in \Cref{sec:neighbouring_pixels}. $0$ \protect\includegraphics[width=1.5cm,height=.2cm]{ures_viridis.png} $0.035$}
\label{fig:ImagenetAux}
\end{figure}

\printbibliography

\end{document}